\theoremstyle{plain}
\newtheorem{theorem}{Theorem}
\newtheorem{lemma}[theorem]{Lemma}
\begin{document}

\title{Ensemble Teaching For Hybrid Label Propagation}
%
%
%

\author{Chen~Gong,~\IEEEmembership{Member,~IEEE,}
        Dacheng~Tao,~\IEEEmembership{Fellow,~IEEE,}
        Xiaojun~Chang,~\IEEEmembership{}
        and~Jian~Yang,~\IEEEmembership{Member,~IEEE}

\thanks{This research is supported by NSF of China (No: 61602246, 61572315, 91420201, 61472187, 61502235, 61233011 and 61373063), 973 Plan of China (No: 2014CB349303 and 2015CB856004), Program for Changjiang Scholars, NSF of Jiangsu Province (No: BK20171430), the Open Project Program (No. MJUKF201723) of Fujian Provincial Key Laboratory of Information Processing and Intelligent Control
(Minjiang University), and Australian Research Council Discovery Project (No: FL-170100117, DP-140102164, LP-150100671). \emph{(Corresponding author: Chen Gong (chen.gong@njust.edu.cn))}.}
\thanks{C. Gong is with the School of Computer Science and Engineering, Nanjing University of Science and Technology, Nanjing 210094, China, and also with Fujian Provincial Key laboratory of Information Processing and Intelligent Control (Minjiang University), Fuzhou 350108, China.}
\thanks{J. Yang is with the School of Computer Science and Engineering, Nanjing University of Science and Technology, China, 210094.}
\thanks{D. Tao is with the UBTECH Sydney Artificial Intelligence Centre and the School of Information Technologies in the Faculty of Engineering and Information Technologies at University of Sydney, 6 Cleveland St, Darlington, NSW 2008, Australia.}
\thanks{X. Chang is with the Language Technologies Institute, Carnegie Mellon University.}
\thanks{\textcopyright20XX IEEE. Personal use of this material is permitted. Permission from IEEE must be obtained for all other uses, in any current or future media, including reprinting/republishing this material for advertising or promotional purposes, creating new collective works, for resale or redistribution to servers or lists, or reuse of any copyrighted component of this work in other works.}

}


\maketitle

\begin{abstract}
  Label propagation aims to iteratively diffuse the label information from labeled examples to unlabeled examples over a similarity graph. Current label propagation algorithms cannot consistently yield satisfactory performance due to two reasons: one is the instability of single propagation method in dealing with various practical data, and the other one is the improper propagation sequence ignoring the labeling difficulties of different examples. To remedy above defects, this paper proposes a novel propagation algorithm called \underline{Hy}brid \underline{D}iffusion under \underline{En}semble \underline{T}eaching (HyDEnT). Specifically, HyDEnT integrates multiple propagation methods as base ``learners'' to fully exploit their individual wisdom, which helps HyDEnT to be stable and obtain consistent encouraging results. More importantly, HyDEnT conducts propagation under the guidance of an ensemble of ``teachers''. That is to say, in every propagation round the simplest curriculum examples are wisely designated by a teaching algorithm, so that their labels can be reliably and accurately decided by the learners. To optimally choose these simplest examples, every teacher in the ensemble should comprehensively consider the examples' difficulties from its own viewpoint, as well as the common knowledge shared by all the teachers. This is accomplished by a designed optimization problem, which can be efficiently solved via the Block Coordinate Descent (BCD) method. Thanks to the efforts of the teachers, all the unlabeled examples are logically propagated from simple to difficult, leading to better propagation quality of HyDEnT than the existing methods. Experiments on six popular datasets reveal that HyDEnT achieves the highest classification accuracy when compared with six state-of-the-art propagation methodologies such as Harmonic Functions, Fick's Law Assisted Propagation, Linear Neighborhood Propagation, Semi-supervised Ensemble Learning, Bipartite Graph-based Consensus Maximization, and Teaching-to-Learn and Learning-to-Teach.
\end{abstract}

\begin{IEEEkeywords}
Label propagation, Machine teaching, Ensemble learning, Block coordinate descent.
\end{IEEEkeywords}

\IEEEpeerreviewmaketitle

\section{Introduction}
\label{sec:intro}
\IEEEPARstart{L}{abel} propagation is an important technique in semi-supervised learning \cite{Goldberg09introductionto,Dornaika2016Learning}. Given an undirected weighted graph, the target of label propagation is to iteratively transfer class labels from labeled examples to unlabeled examples so that the unlabeled examples can be accurately classified. Label propagation is a transduction problem \cite{gammerman1998learning}, which means that we are interested in the classification of a particular set of examples rather than a general decision function for classifying the future unseen examples. In other words, the unlabeled examples to be classified by a label propagation algorithm are available in advance, and they are directly assigned labels without the aid of explicit decision function. Due to its encouraging performance and solid theoretical foundation, label propagation has been applied to various computer vision tasks such as saliency detection \cite{gong2015saliency}, image annotation \cite{cao2008annotating}, image segmentation \cite{wang2009linear}, \emph{etc}.\par

Mathematically, label propagation can be described as follows. Given a labeled set $\mathcal{L}=\{\mathbf{x}_1, \mathbf{x}_2,\cdots, \mathbf{x}_l\}$ containing $l$ labeled examples $\{\mathbf{x}_i\}_{i=1}^{l}\in\mathbb{R}^d$ ($d$ is the dimensionality of every example) and unlabeled set $\mathcal{U}=\{\mathbf{x}_{l+1}, \mathbf{x}_{l+2},\cdots, \mathbf{x}_{l+u}\}$ of size $u$, we may build a weighted similarity graph $\mathcal{G}=\left\langle \mathcal{V},\mathcal{E} \right\rangle$ where $\mathcal{V}$ is the node set representing all $n=l+u$ examples, and $\mathcal{E}$ is the edge set encoding the pairwise relationship between these examples. Then the target of label propagation is to iteratively propagate the labels from $\mathcal{L}$ to $\mathcal{U}$ so that all the examples in $\mathcal{U}$ can be assigned correct labels. However, the results generated by existing methods such as \cite{wang2013dynamic,wang2009linear,Zhu03semisupervisedlearning, Zhou03learningwith} are often far from perfect. We consider that there are two main reasons contributing to this phenomenon. Firstly, the strength of one propagation algorithm is very limited and there does not exist a propagation method that can perfectly handle all the practical situations. For example, the propagation process can be misled by the outliers or ``bridge points'' \cite{gong2015deformed}, therefore utilizing only one method is not reliable for achieving accurate propagation. Secondly, the propagation sequence adopted by existing methods is completely governed by the connectivity among examples in the graph, namely the label information will be transferred from one example to another as long as there is an edge between them. This propagation sequence is sometimes problematic because it does not explicitly consider the propagation difficulty or reliability of different unlabeled examples. Above two shortcomings are very likely to incur error-prone propagations and impair the final performance.\par

In order to address the above two defects that are ubiquitous in the current label diffusion methodologies, this paper proposes a new propagation algorithm called \textbf{Hy}brid \textbf{D}iffusion under \textbf{En}semble \textbf{T}eaching (dubbed ``HyDEnT''). Specifically, to remedy the first shortcoming, we combine multiple existing propagation algorithms in a hybrid way to accomplish reliable propagation. As a result, each of the incorporated base classifiers will give full play to its ability and meanwhile complement to others for obtaining satisfactory performance. In order to deal with the second shortcoming, we regard all the involved propagation algorithms as ``learners'' and associate each of them with a ``teacher'', so that the entire diffusion process is guided by the ensemble of teachers and thus an optimized learning sequence can be generated. Particularly, we assume that different examples have different levels of difficulty, and the teachers should consider the learners' dynamic learning performance to design a suitable propagation sequence, so that all the unlabeled examples in $\mathcal{U}$ are logically classified from simple to difficult. In each propagation round, the simplest examples (\emph{i.e.} a curriculum) agreed by the ensemble of teachers are designated to the learners, and these simplest examples are decided by comprehensively considering both individuality and consistency of all the teachers. This modification to the widespread propagation strategy facilitates the classification of subsequent complex examples by using the accumulated knowledge from the previously propagated simple examples.\par

\begin{figure*}
  \centering
  \includegraphics[width=0.95 \linewidth]{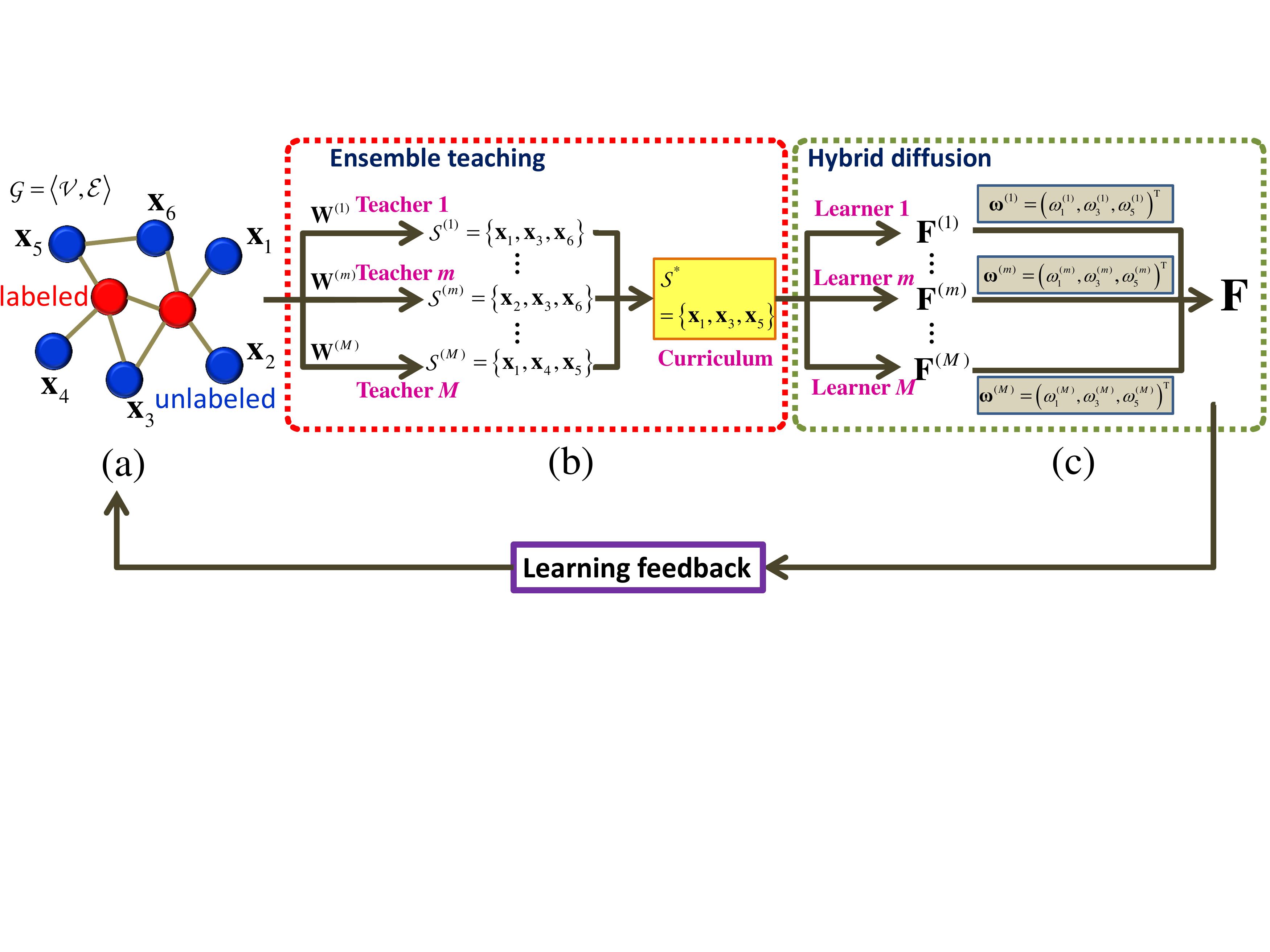}\\
  \caption{The framework of our HyDEnT algorithm. There are totally $M$ learners and $M$ corresponding teachers. In (a), a graph $\mathcal{G}$ is built in which the labeled examples and unlabeled examples $\mathbf{x}_1\sim\mathbf{x}_6$ are represented by red and blue nodes, respectively. In (b), the $M$ teachers first pick up the simplest curriculum examples from $\mathbf{x}_1\sim\mathbf{x}_6$ from their own viewpoint (\emph{i.e.} $\mathcal{S}^{(1)}\sim\mathcal{S}^{(M)}$), and then compromise to a consistent result $\mathcal{S}^{*}$ via the designed ensemble teaching algorithm. In (c), all the selected simplest examples are classified by $M$ different learners, and the obtained label matrices are $\mathbf{F}^{(1)}, \mathbf{F}^{(2)},\cdots, \mathbf{F}^{(M)}$. They are further integrated to the final result $\mathbf{F}$ based on the weighting vectors $\bm{\omega}^{(m)}$ ($m=1,2,\cdots,M$), in which the element ${\omega}_j^{(m)}$ ($j=1,3,5$ in this example) denotes the weight of the $m$-th classifier's decision on the curriculum example $\mathbf{x}_j$. After ``learning'' the curriculum examples in this round, the learners deliver a learning feedback to the ensemble of teachers to help them adaptively establish the suitable curriculum in the next round.}\label{fig:framework}
\vspace{-5pt}
\end{figure*}

The framework of the proposed algorithm is presented in Fig.~\ref{fig:framework}. The labeled examples (two red balls in Fig.~\ref{fig:framework}(a)) and unlabeled examples $\mathbf{x}_1\sim\mathbf{x}_6$ (blue balls) are represented by a graph $\mathcal{G}$, and the relationship between pairs of examples are modeled by the edges between them. Suppose there are totally $M$ learners (\emph{i.e.} base classifiers) constituted by $M$ different propagation models, in which the adjacency matrices \cite{Zhu03semisupervisedlearning,wang2016semi} for describing $\mathcal{G}$ are $\mathbf{W}^{(1)},\mathbf{W}^{(2)},\cdots,\mathbf{W}^{(M)}$, respectively. Then we introduce $M$ teachers to ``teach'' these $M$ learners, and each teacher is responsible for teaching one learner. In each propagation round (see Fig.~\ref{fig:framework}(b)), the $m$-th ($m=1,2,\cdots,M$) teacher generates the curriculum $\mathcal{S}^{(m)}$ that is the simplest to the $m$-th learner, and then the overall simplest curriculum set $\mathcal{S}^{*}$ is established by comprehensively considering all the curriculums $\mathcal{S}^{(1)}\sim\mathcal{S}^{(M)}$ recommended by different teachers. 
Therefore, $\mathcal{S}^{*}$ is established via an ensemble teaching way. Given $\mathcal{S}^{*}$, every learner will ``learn'' these simplest examples (see Fig.~\ref{fig:framework}(c)) by propagating the labels from $\mathcal{L}$ to $\mathcal{S}^{*}$, and the obtained label matrix is $\mathbf{F}^{(m)}\in\mathbb{R}^{n\times c}$ ($m=1,2,\cdots,M$, and $c$ is the total number of classes). After that, an integrated label matrix $\mathbf{F}$ is computed as the sum of $\mathbf{F}^{(1)},\mathbf{F}^{(2)},\cdots,\mathbf{F}^{(M)}$ weighted by $\bm{\omega}^{(1)},\bm{\omega}^{(2)},\cdots,\bm{\omega}^{(M)}$, respectively. We call this hybrid diffusion because multiple label propagation models are combined to achieve accurate learning. Finally, a learning feedback is delivered to the ensemble of teachers to assist them to correctly determine the subsequent simplest curriculum. Above teaching and learning process iterates until all the examples in $\mathcal{U}$ have been selected, and the produced label matrix is denoted by $\mathring{\mathbf{F}}$. The $(i,j)$-th element in $\mathring{\mathbf{F}}$ (or $\mathbf{F}^{(m)}$ and $\mathbf{F}$ mentioned above) encodes the probability of the $i$-th ($i=1,2,\cdots,n$) example $\mathbf{x}_i$ belonging to the $j$-th ($j=1,2,\cdots,c$) class $\mathcal{C}_{j}$.\par

The adopted hybrid label propagation combines the advantages possessed by different learners, and thus can improve the performance of every single classifier. Besides, the incorporated multiple teachers also cooperate with each other to wisely re-organize the propagation sequence so that the unlabeled examples are logically classified from simple to difficult. As a result, the proposed algorithm is able to outperform other existing typical methods, which will be empirically revealed in Section~\ref{sec:Experiments}. Furthermore, the process of learning from simple to difficult is also consistent with humans' cognitive process \cite{elman1993learning, khan2011humans}, who gradually gain rich and complex knowledge from the childish stage to the mature stage.\\
\textbf{Notations:} Throughout this paper, we use bold capital letter, bold lowercase letter and italic letter to represent matrix, vector and constant, respectively. The superscript, \emph{e.g.} ``$(m)$'', denotes that the related variable is associated with the $m$-th teacher or learner. Given a matrix $\mathbf{A}$, $\mathbf{A}_i$ is the $i$-the row of $\mathbf{A}$, and $\mathbf{A}_{ij}$ is $\mathbf{A}$'s $(i,j)$-th element. $tr(\mathbf{A})$ denotes the trace of $\mathbf{A}$ given $\mathbf{A}$ is square. The notations $\left\|\mathbf{A}\right\|_{\mathrm{F}}$ and $\left\|\mathbf{A}\right\|_{2,1}$ correspond to $\mathbf{A}$'s Frobenius norm and $l_{2,1}$ norm, which are defined by $\left\|\mathbf{A}\right\|_{\mathrm{F}}=\sqrt{\sum_{i,j}\mathbf{A}_{ij}^2}$ and $\left\|\mathbf{A}\right\|_{2,1}=\sum_i\sqrt{\sum_{j}\mathbf{A}_{ij}^2}$, respectively. Besides, we use ``$\circ$'' to represent the Hadamard product between two matrices, which means $(\mathbf{A}\circ\mathbf{B})_{ij}=\mathbf{A}_{ij}\mathbf{B}_{ij}$.

\section{Related Work}
\label{sec:related work}

Our work is related to semi-supervised label propagation, ensemble learning and machine teaching, therefore in this section we review some representative literatures on these three topics.

\subsection{Semi-supervised Label Propagation}
Label propagation belongs to the scope of semi-supervised learning \cite{Goldberg09introductionto}, of which the target is to classify a massive number of unlabeled examples given the existence of only a few labeled examples. Most existing semi-supervised algorithms are based on Support Vector Machines (SVMs) \cite{joachims1999transductive,li2011towards} or similarity graph model \cite{wang2009linear,Zhu03semisupervisedlearning,Zhou03learningwith,gong2015deformed,belkin2006manifold,Gong2015Fick}, and label propagation falls into the latter category.\par

Given an established similarity graph, a label propagation algorithm gradually propagates the labels of seed nodes (examples) to the unlabeled nodes. 
In each propagation round, the labels of all the examples are updated by considering both their previous states and the influence of other examples. Then the final steady state conveys the accurate labels of originally unlabeled examples. For example, Zhu \emph{et al.} \cite{Zhu2002Report} propose to iteratively propagate labels on a weighted graph by executing random walks with clamping operations. 
The probability of each unlabeled node to be absorbed by the different labeled nodes is employed to infer the labels of unlabeled examples.
Different from \cite{Zhu2002Report} which works
on asymmetric graph Laplacian, Zhou \emph{et al.} \cite{Zhou03learningwith} utilize a symmetric graph Laplacian to implement propagation and achieve satisfactory performance.
Besides, Wang \emph{et al.} \cite{wang2009linear} develop the linear neighbourhood propagation which assumes that any node in the graph can be linearly reconstructed by its $K$ nearest neighbours. In contrast to the above methods that are based on a static graph throughout the entire propagation, Wang \emph{et al.} \cite{wang2013dynamic} develop dynamic label propagation to adaptively update the edge weights so that the graph can always faithfully reflect the similarities between examples during propagation. Recently, Gong \emph{et al.} \cite{Gong2015Fick} treat label propagation on a graph as fluid diffusion on a plane, and successfully apply Fick's law of diffusion in physical area to facilitate propagation. 

\subsection{Ensemble Learning}
An ensemble learning method \cite{dietterich2000ensemble,zhou2012ensemble} incorporates multiple learning algorithms to obtain better performance than that could be obtained from any of the constituent learning algorithms. In other words, an ensemble of classifiers is a set of different classifiers of which the individual predictions are combined in some weighted or unweighted way to form an enhanced performance. The classifiers constituting the ensemble should comply with two criteria: 1) diversity, namely each base classifier of the ensemble should hold unique information and cover one side of the entire underlying facts; and 2) independence, namely every base classifier is able to draw a conclusion that is not so bad all by itself.\par

Roughly speaking, there are three representative fashions for building an ensemble learning algorithm, \emph{i.e.} boosting, bagging, and hybrid. Boosting sequentially trains a set of weak classifiers and then combines all of them to form a strong classifier, during which the later trained classifiers focus more on the errors made by the earlier classifiers. Perhaps the most well-known boosting algorithm is Adaboost \cite{freund1995desicion}. AdaBoost is adaptive because the examples are automatically re-weighted after each iteration so that the subsequent weak classifiers will pay more attention to the examples that are probably mislabeled by the previous classifiers. Different from boosting which trains weak classifiers sequentially, bagging generates the weak classifiers in parallel, and then aggregates them into a strong classifier. Random Forest (RF) \cite{breiman2001random} is a representative bagging method, in which every base classifier is a decision tree that accomplishes random feature selection. In boosting and bagging, the base classifiers are usually the same. In contrast, a hybrid method contains different base classifiers, and their outputs are properly fused to yield the final result. The proposed HyDEnT is a hybrid method.\par

Regarding ensemble label propagation, Woo \emph{et al.} \cite{woo2012semi} randomly sample several unlabeled sets, and deploy the method in \cite{wang2009linear} to build multiple base classifiers. Pan \emph{et al.} \cite{panensemble2015} propose to use ``Consensus Maximization Fusion Model'' to combine the label prediction results of multiple random walk classifiers. Lin \emph{et al.} \cite{lin2014ensemble} develop an ensemble propagation method for different labeled and unlabeled sources, in which local and global consensus are particularly considered. However, neither of above algorithms is hybrid nor has a teaching component as in our method.

\subsection{Machine Teaching}

The early works regarding machine teaching mainly focus on the the theory of ``teaching dimension'' 
\cite{shinohara1991teachability,balbach2006teaching}. Currently, there are mainly two trends in developing machine teaching algorithms. One trend assumes that the teacher knows the real labels of curriculum examples 
\cite{singla2014near,zhu2013machine,patil2014optimal,zhu2015machineteachingQA,mei2015using}, while in the other trend the teacher only knows the difficulty of unlabeled examples without accessing their real labels. Curriculum learning \cite{bengio2009curriculum} and self-paced learning \cite{kumar2010self,jiang2014self} belong to the second trend which argue that the learning process should follow the simple-to-difficult sequence.\par

Recently, Gong \emph{et al.} \cite{Gong2016TLLT,Gong2016AAAI} proposed the framework of Teaching-to-Learn and Learning-to-Teach (TLLT), which extends curriculum learning by adding a learning feedback to help teacher adaptively update the curriculums. Inspired by \cite{Gong2016TLLT}, this work attempts to adapt the TLLT framework to hybrid label propagation.


\section{Curriculum Generation via Ensemble Teaching}
\label{sec:CurriculumGenerationViaEnsembleTeaching}
This section introduces our ensemble teaching strategy. The ultimate target of ensemble teaching is to select the simplest curriculum examples for the learners in every propagation round. To this end, two factors should be considered by each of the $M$ teachers: firstly, the curriculum examples in $\mathcal{S}^{(m)}$ ($m=1,2,\cdots, M$) decided by the $m$-th teacher should be simple in terms of its associated learner; and secondly, these $M$ teachers are not isolated and they should share some common knowledge in determining the curriculum. Therefore, the ensemble teaching model can be formulated as the following optimization problem:
\begin{equation}\label{eq:1}
  \min \ \sum\nolimits_{m=1}^{M}{\mathcal{A}({{\mathcal{S}}^{(m)}})}+{{\beta }_{0}}\Omega ({\mathcal{S}^{(1)}},{\mathcal{S}^{(2)}},\cdots ,{\mathcal{S}^{(M)}}),
\end{equation}
where $\mathcal{A}(\cdot)$ is a function for the $m$-th teacher to select the simplest examples from its own viewpoint, $\Omega(\cdot)$ is a function modeling the relationship among different teachers, and $\beta_0>0$ is a trade-off parameter. Next we will explain the detailed expressions of $\mathcal{A}(\cdot)$ and $\Omega(\cdot)$.

\vspace{-5pt}
\subsection{Establishment of $\mathcal{A}(\cdot)$}
The function $\mathcal{A}(\cdot)$ is exploited to assist the individual $m$-th teacher to choose a curriculum set $\mathcal{S}^{(m)}\subset\mathcal{U}$. In this subsection, we temporarily drop the superscript $(m)$ of the appeared variables for simplicity. Suppose the adjacency matrix of graph $\mathcal{G}$ for the $m$-th teacher-learner pair is $\mathbf{W}$\footnote{Different propagation methods may have different ways for generating adjacency matrix. For example, \cite{Zhu2002Report} adopts Gaussian kernel function, \cite{Gong2015Fick} leverages Fick's law of diffusion, and \cite{wang2009linear} is based on the locally linear reconstruction.},
then the graph Laplacian is $\mathbf{L}=\mathbf{D}-\mathbf{W}$ where $\mathbf{D}$ is the degree matrix with diagonal elements computed by ${{\mathbf{D}}_{ii}}=\sum\nolimits_{j=1}^{n}{{\mathbf{W}}_{ij}}$. Based on $\mathbf{W}$, the $m$-th teacher will evaluate the difficulty of $\mathbf{x}_i\in\mathcal{U}$ from $\mathbf{x}_i$'s reliability and discriminability \cite{Gong2016TLLT}.
\subsubsection{Reliability}
To assess the propagation reliability of $\mathbf{x}_i\in\mathcal{U}$, we assign a random variable $y_{i}$ to the example $\mathbf{x}_i$, and treat the propagations on $\mathcal{G}$ as a Gaussian process \cite{zhu2003semiGaussian}. Therefore, this Gaussian process is modeled as a multivariate Gaussian distribution over the random variables
$\mathbf{y}\!=\!(y_1,\!\cdots\!,y_n)^\top$, which has a concise form $\mathbf{y}\!\sim\! \mathcal{N}(\mathbf{0},\mathbf{\Sigma})$
with its covariance matrix being $\mathbf{\Sigma}\!=\!{{(\mathbf{L}\!+\!{\mathbf{I}}/{{{\kappa }^{2}}})}^{-1}}$. Here $\mathbf{I}$ denotes the identity matrix, and the parameter ${{\kappa }^{2}}$ controls the ``sharpness'' of the distribution which is fixed to 100 throughout this paper. Therefore, a curriculum $\mathcal{S}$ is reliable w.r.t. the labeled set $\mathcal{L}$ if the conditional entropy $H(\mathbf{y}_{\mathcal{S}}|\mathbf{y}_{\mathcal{L}})$ is small, where $\mathbf{y}_{\mathcal{S}}$ and $\mathbf{y}_{\mathcal{L}}$ denote the subvectors of $\mathbf{y}$ corresponding to $\mathcal{S}$ and $\mathcal{L}$, respectively. This is because small $H(\mathbf{y}_{\mathcal{S}}|\mathbf{y}_{\mathcal{L}})$ suggests that the curriculum set $\mathcal{S}$ shows no ``surprise'' to the labeled set $\mathcal{L}$.\par

Based on above consideration, we use the property of multivariate Gaussian \cite{bishop2006pattern} and select the most reliable curriculum by optimizing:
\begin{equation}\label{eq:2}
\begin{split}
&\min_{\mathcal{S}\subset\mathcal{U}} ~H( \mathbf{y}_{\mathcal{S}}|\mathbf{y}_{\mathcal{L}} )\\
\Leftrightarrow&\min_{\mathcal{S}\subset\mathcal{U}}~H( \mathbf{y}_{\mathcal{S}\cup\mathcal{L}} )   -H( \mathbf{y}_{\mathcal{L}} )\\
\Leftrightarrow&\min_{ \mathcal{S}\subset \mathcal{U} } \left( \frac{s+l}{2}\big( 1+\ln 2\pi \big)+\frac{1}{2}\ln \big|\mathbf{\Sigma}_{\mathcal{S}\cup\mathcal{L},\mathcal{S}\cup\mathcal{L}}\big| \right)  \\
&\quad\quad\quad~~ -\left( \frac{l}{2}\big( 1+\ln 2\pi \big)+\frac{1}{2}\ln \big|\mathbf{\Sigma}_{\mathcal{L},\mathcal{L}}\big| \right)  \\
\Leftrightarrow&\min_{ \mathcal{S}\subset \mathcal{U} } \frac{s}{2}\big( 1+\ln 2\pi \big) + \frac{1}{2}\ln \frac{\big|\mathbf{\Sigma}_{\mathcal{S}\cup\mathcal{L},\mathcal{S}\cup\mathcal{L}}\big|}{\big|\mathbf{\Sigma}_{\mathcal{L},\mathcal{L}}\big|}, \end{split}
\end{equation}
where $\mathbf{\Sigma}_{\mathcal{L},\mathcal{L}}$ and $\mathbf{\Sigma}_{\mathcal{S}\cup\mathcal{L},\mathcal{S}\cup\mathcal{L}}$ are submatrices of $\mathbf{\Sigma}$ associated with the corresponding subscripts. By further partitioning $\mathbf{\Sigma}_{\mathcal{S}\cup\mathcal{L},\mathcal{S}\cup\mathcal{L}}
=\left( \begin{smallmatrix}
   \mathbf{\Sigma}_{\mathcal{S},\mathcal{S}}  &  \mathbf{\Sigma}_{\mathcal{S},\mathcal{L}}  \\
   \mathbf{\Sigma}_{\mathcal{L},\mathcal{S}}  &  \mathbf{\Sigma}_{\mathcal{L},\mathcal{L}}  \\
\end{smallmatrix} \right)$ where $\mathbf{\Sigma}_{\mathcal{S},\mathcal{S}}$ is the submatrix of $\mathbf{\Sigma}$ corresponding to $\mathcal{S}$, we have
\begin{equation}
  \frac{\left| {{\mathbf{\Sigma }}_{\mathcal{S}\cup\mathcal{L}, \mathcal{S}\cup\mathcal{L}}} \right|}{\left| {{\mathbf{\Sigma }}_{\mathcal{L},\mathcal{L}}} \right|}
  \!=\!\frac{\left| {{\mathbf{\Sigma }}_{\mathcal{L},\mathcal{L}}} \right|\left| {{\mathbf{\Sigma }}_{\mathcal{S},\mathcal{S}}}\!-\!{{\mathbf{\Sigma }}_{\mathcal{S},\mathcal{L}}}\mathbf{\Sigma }_{\mathcal{L},\mathcal{L}}^{-1}{{\mathbf{\Sigma }}_{\mathcal{L},\mathcal{S}}} \right|}{\left| {{\mathbf{\Sigma }}_{\mathcal{L},\mathcal{L}}} \right|}
  \!=\!\left| {{\mathbf{\Sigma }}_{\mathcal{S}|\mathcal{L}}} \right|, \nonumber
\end{equation}
where ${{\mathbf{\Sigma}}_{\mathcal{S}|\mathcal{L}}}$ is the covariance matrix of the conditional distribution
$p(\mathbf{y}_{\mathcal{S}}|\mathbf{y}_{\mathcal{L}})$ and is naturally positive semidefinite.
Therefore, the problem \eqref{eq:2} is equivalent to
\begin{equation}\label{eq:3}
\min_{\mathcal{S}\subseteq \mathcal{U}} ~\mathrm{tr}\big( \mathbf{\Sigma}_{\mathcal{S},\mathcal{S}}-\mathbf{\Sigma}_{\mathcal{S},\mathcal{L}}\mathbf{\Sigma}_{\mathcal{L},\mathcal{L}}^{-1}
\mathbf{\Sigma}_{\mathcal{L},\mathcal{S}} \big).
\end{equation}
\subsubsection{Discriminability}
A curriculum $\mathcal{S}$ is discriminable if the included examples are significantly inclined to certain classes. The tendency of an example $\mathbf{x}_{i}$ belonging to a class $\mathcal{C}_j$ is modeled by the average commute time between $\mathbf{x}_{i}$ and all the examples in $\mathcal{C}_j$, which is formally represented by
\begin{equation}\label{eq:4}
\bar{T}(\mathbf{x}_i,\mathcal{C}_j)=\frac{1}{n_{\mathcal{C}_j}}\sum\nolimits_{\mathbf{x}_{i'}\in \mathcal{C}_j}
T(\mathbf{x}_i,\mathbf{x}_{i'}).
\end{equation}
In Eq.~\eqref{eq:4}, $n_{\mathcal{C}_j}$ denotes the number of examples in the class $\mathcal{C}_j$; $T(\mathbf{x}_i,\mathbf{x}_{i'})$ is the commute time between examples $\mathbf{x}_i$ and $\mathbf{x}_{i'}$ that is defined as \cite{qiu2007clustering}:
\begin{equation}\label{eq:CommuteTime}
T(\mathbf{x}_i,\mathbf{x}_{i'})=\sum\nolimits_{k=1}^n  h(\lambda_k) \big( u_{ki}-u_{ki'} \big)^2,
\end{equation}
where $0\!=\!\lambda_1\! \le\! \cdots \!\le\! \lambda_n$ are the eigenvalues of $\mathbf{L}$,
and $\mathbf{u}_1,\!\cdots\!,\mathbf{u}_n$ are the associated eigenvectors; $u_{ki}$ denotes the $i$-th element of $\mathbf{u}_k$; $h(\lambda_k)=1/\lambda_k$ if $\lambda_k\ne 0$ and $h(\lambda_k)=0$ otherwise.\par
Therefore, suppose $\mathcal{C}_1$ and $\mathcal{C}_2$ are the two closest classes to $\mathbf{x}_i\in\mathcal{U}$ measured by average commute time, then $\mathbf{x}_i$ is discriminable if the gap $g(\mathbf{x}_i)=\bar{T}(\mathbf{x}_i,\mathcal{C}_{2})-\bar{T}(\mathbf{x}_i,\mathcal{C}_{1})$ is large. That is, the simplest curriculum in view of discriminability is found by solving
\begin{equation}\label{eq:5}
\min_{ \mathcal{S}=\{\mathbf{x}_{i_k}\in~\mathcal{U}\}_{k=1}^s } \sum\nolimits_{k=1}^s 1/g(\mathbf{x}_{i_k}),
\end{equation}
where $s$ is the amount of examples in the set $\mathcal{S}$.\par
By putting Eqs.~\eqref{eq:3} and \eqref{eq:5} together, we arrive at the following optimization problem:
\begin{equation}\label{eq:6}
\min\nolimits_{ \mathcal{S}=\{\mathbf{x}_{i_k}\in~\mathcal{U}\}_{k=1}^s } \mathcal{A}(\mathcal{S}),
\end{equation}
where $\mathcal{A}(\mathcal{S})$ appeared in Eq.~\eqref{eq:1} is defined by
\begin{equation}\label{eq:7}
  \mathcal{A}(\mathcal{S}) = \mathrm{tr}\big( \mathbf{\Sigma}_{\mathcal{S},\mathcal{S}}-\mathbf{\Sigma}_{\mathcal{S},\mathcal{L}}\mathbf{\Sigma}_{\mathcal{L},\mathcal{L}}^{-1}
\mathbf{\Sigma}_{\mathcal{L},\mathcal{S}} \big) \!+\!\sum\nolimits_{k=1}^s 1/g(\mathbf{x}_{i_k}).
\end{equation}\par
However, Eq.~\eqref{eq:6} is symbolic and cannot be directly solved, so we provide a mathematically tractable model for Eq.~\eqref{eq:6}. In each propagation round, the seed labels will be diffused to the unlabeled examples that are direct neighbors (denoted by the neighbouring set $\mathcal{B}$) of $\mathcal{L}$ on graph $\mathcal{G}$, so we only need to choose $s$ distinct examples from $\mathcal{B}$.
Therefore, for the $m$-th teacher, we introduce a binary selection matrix
$\mathbf{S}\in[0,1]^{b\times s}$ ($b$ is the size of $\mathcal{B}$) such that its $(i,j)$-th element $\mathbf{S}_{ij}$ represents the appropriateness of the $i$-th example in $\mathcal{B}$ being selected as the $j$-th element of the curriculum $\mathcal{S}$. Ideally, we hope $\mathbf{S}$ to have two properties: 1) every element $\mathbf{S}_{ij}$ is preferred to be $\{0,1\}$-binary, which indicates that the teacher strongly discourages or encourages $\mathbf{x}_i$ to be a curriculum example; and 2) $\mathbf{S}$ should be orthogonal, which ensures that every example is selected only once by the $m$-th teacher. Above two ideal properties can be mathematically achieved by optimizing $\min_{\mathbf{S}}~\left\|\mathbf{S}\circ\mathbf{S}-\mathbf{S}\right\|_{\mathrm{F}}^2\!+\!\left\|\mathbf{S}^{\top}\mathbf{S}\!-\!\mathbf{I}\right\|_{\mathrm{F}}^2$, where the first term and second term realize the properties 1) and 2), respectively. Besides, we introduce a diagonal matrix $\mathbf{G}\in\mathbb{R}^{b\times b}$ with the diagonal elements $\mathbf{G}_{ii}=1/g(\mathbf{x}_{i})$ for any $\mathbf{x}_{i}\in\mathcal{B}$, then the problem \eqref{eq:6} can be reformulated as follows:
\begin{equation}\label{eq:8}
\begin{split}
\min_{\mathbf{S}}& ~\mathrm{tr}\big( \mathbf{S}^{\top}\mathbf{\Sigma}_{\mathcal{B},\mathcal{B}}\mathbf{S}-
\mathbf{S}^{\top}\mathbf{\Sigma}_{\mathcal{B},\mathcal{L}} \mathbf{\Sigma}_{\mathcal{L},\mathcal{L}}^{-1} \mathbf{\Sigma}_{\mathcal{L},\mathcal{B}}\mathbf{S} \big)\\
&+\mathrm{tr}\big( \mathbf{S}^{\top} \mathbf{G}\mathbf{S} \big)+\beta_1\big(\left\|\mathbf{S}\circ\mathbf{S}-\mathbf{S}\right\|_{\mathrm{F}}^2
+\left\|\mathbf{S}^{\top}\mathbf{S}\!-\!\mathbf{I}\right\|_{\mathrm{F}}^2\big),\\
\end{split}
\end{equation}
where $\beta_1$ is a trade-off parameter. By further denoting
$\mathbf{R}=\mathbf{\Sigma}_{\mathcal{B},\mathcal{B}}-\mathbf{\Sigma}_{\mathcal{B},\mathcal{L}}\mathbf{\Sigma}_{\mathcal{L},\mathcal{L}}^{-1}
\mathbf{\Sigma}_{\mathcal{L},\mathcal{B}}+\mathbf{G}$, Eq.~\eqref{eq:8} is simplified as
\begin{flalign}\label{eq:9}
\min_{\mathbf{S}}~\mathrm{tr}\big( \mathbf{S}^{\top}\mathbf{R}\mathbf{S} \big)
+\beta_1\big(\left\|\mathbf{S}\circ\mathbf{S}-\mathbf{S}\right\|_{\mathrm{F}}^2
+\left\|\mathbf{S}^{\top}\mathbf{S}\!-\!\mathbf{I}\right\|_{\mathrm{F}}^2\big),
\end{flalign}
which is the curriculum generation model for a single teacher.

\subsection{Establishment of $\Omega(\cdot)$}
The term $\Omega(\cdot)$ exploits the relationship among $M$ different teachers so that they work collaboratively to render the optimal curriculum. In this work, we hope that all teachers can maximally draw a consensus on the determination of difficult unlabeled examples, and then the remaining examples are simple and should be included in the optimal curriculum $\mathcal{S}^{*}$. That is to say, we aim to find the solution of the following optimization problem:
\begin{equation}\label{eq:10}
\max\nolimits_{{\mathcal{S}^{(1)}},{\mathcal{S}^{(2)}},\cdots ,\mathcal{S}^{(M)} } \Omega ({\mathcal{S}^{(1)}},{\mathcal{S}^{(2)}},\cdots ,\mathcal{S}^{(M)}),
\end{equation}
where $\Omega (\cdot)$ is defined by
\begin{equation}\label{eq:11}
  \Omega ({\mathcal{S}^{(1)}},{\mathcal{S}^{(2)}},\cdots ,{\mathcal{S}^{(M)}})=\left|\bigcap_{m=1}^M\mathcal{U}-{\mathcal{S}^{(m)}}\right|
\end{equation}
with ``$\left|\cdot\right|$'' denoting the set size. The operation $\mathcal{U}-{\mathcal{S}^{(m)}}$ computes the complementary set of $\mathcal{S}^{(m)}$ in $\mathcal{U}$.\par

For realizing Eq.~\eqref{eq:10}, we put the selection matrices $\mathbf{S}^{(1)},\mathbf{S}^{(2)},\cdots,\mathbf{S}^{(M)}$ produced by the $M$ teachers together and obtain a stacked matrix $\bar{\mathbf{S}}=\left(\mathbf{S}^{(1)},\mathbf{S}^{(2)},\cdots,\mathbf{S}^{(M)}\right)$ with size $b\times(s\times M)$. As a result, we may use the $l_{2,1}$ norm on $\bar{\mathbf{S}}$ to discover the shared common knowledge across different teachers, then the difficult examples agreed by all the teachers can be found by solving
\begin{equation}\label{eq:12}
  \min\nolimits_{\bar{\mathbf{S}}}\left\|\bar{\mathbf{S}}\right\|_{2,1},
\end{equation}
and the indices of all-zero rows in the optimized $\bar{\mathbf{S}}$ correspond to the difficult examples agreed by all $M$ teachers that cannot be taken as curriculum.

\subsection{Optimization}
\label{sec:CurriculumGenerationViaEnsembleTeaching_Optimization}
By combining Eqs.~\eqref{eq:9} and \eqref{eq:12}, our ensemble teaching model is formally represented by
\begin{equation}\label{eq:14}
\begin{split}
&\min_{\bar{\mathbf{S}}}~Q(\bar{\mathbf{S}})=\sum\nolimits_{m=1}^M\mathrm{tr}\big( \mathbf{S}^{(m)\top}\mathbf{R}^{(m)}\mathbf{S}^{(m)} \big)\!+\!\beta_0\left\|\bar{\mathbf{S}}\right\|_{2,1} \\
&\quad+\!\beta_1\!\sum\nolimits_{m=1}^M\!\!\left(\left\|\mathbf{S}^{(m)}\!\circ\!\mathbf{S}^{(m)}\!-\!\mathbf{S}^{(m)}\right\|_\mathrm{F}^2
\!+ \!\left\|\mathbf{S}^{(m)\top}\mathbf{S}^{(m)}\!-\!\mathbf{I}\right\|_{\mathrm{F}}^2\right)\!\!,
\end{split}
\end{equation}
where $\beta_0>0$ is the trade-off parameter.\par

The problem \eqref{eq:14} can be easily solved via Block Coordinate Descent (BCD), which updates blocks of variables at every iteration until convergence. For our method, at one time we compute the gradient related to $\mathbf{S}^{(m)}$ ($m$ takes a value from $1,2,\cdots,M$), which is denoted by $\nabla_{\mathbf{S}^{(m)}} Q=\frac{\partial Q}{\partial \mathbf{S}^{(m)}}$, and then decrease the objective function $Q(\bar{\mathbf{S}})$ by updating $\mathbf{S}^{(m)}$ along the opposite direction of the gradient $\nabla_{\mathbf{S}^{(m)}} Q$. As such, the objective function can be gradually minimized by cyclically updating $\mathbf{S}^{(1)},\mathbf{S}^{(2)},\cdots,\mathbf{S}^{(M)}$.\par

Next we derive the updating rule for $\mathbf{S}^{(m)}$. According to the definition of $\left\|\cdot\right\|_{2,1}$ in the introduction, it is easy to see that $\left\|\bar{\mathbf{S}}\right\|_{2,1}=\mathrm{tr}(\bar{\mathbf{S}}^{\top}\mathbf{H}\bar{\mathbf{S}})$ where $\mathbf{H}$ is a diagonal matrix with the diagonal elements $\mathbf{H}_{ii}=\frac{1}{2\left\|\bar{\mathbf{S}}_i\right\|_2}$ ($\bar{\mathbf{S}}_i$ represents the $i$-th row of the matrix $\bar{\mathbf{S}}$). Practically, $\bar{\mathbf{S}}_i$ could be zero, so we slightly modify the strict definition of $\mathbf{H}_{ii}$ as $\mathbf{H}_{ii}=\frac{1}{2\left\|\bar{\mathbf{S}}_i\right\|_2+\zeta}$ with $\zeta$ being a very small positive number. Therefore, by recalling that $\bar{\mathbf{S}}=(\mathbf{S}^{(1)},\mathbf{S}^{(2)},\cdots,\mathbf{S}^{(M)})$, we know $\left\|\bar{\mathbf{S}}\right\|_{2,1}=\sum_{m=1}^M~\mathrm{tr}(\mathbf{S}^{(m)\top}\mathbf{H}\mathbf{S}^{(m)})$. Consequently, the subproblem related to $\mathbf{S}^{(m)}$ is
\begin{equation}\label{eq:15}
\begin{split}
&\min_{\mathbf{S}^{(m)}}~Q(\mathbf{S}^{(m)})\!=\!\mathrm{tr}\big( \mathbf{S}^{(m)\top}\!\mathbf{R}^{(m)}\mathbf{S}^{(m)} \big)\!+\!\beta_0\mathrm{tr}\big(\mathbf{S}^{(m)\top}\!\mathbf{H}\mathbf{S}^{(m)}\big) \\
&\qquad+\!\beta_1\!\left(\left\|\mathbf{S}^{(m)}\circ\mathbf{S}^{(m)}\!-\!\mathbf{S}^{(m)}\right\|_\mathrm{F}^2
\!+ \!\left\|\mathbf{S}^{(m)\top}\mathbf{S}^{(m)}\!-\!\mathbf{I}\right\|_{\mathrm{F}}^2\right).
\end{split}
\end{equation}\par
To obtain the updating rule for $\mathbf{S}^{(m)}$, we need to compute the gradient $\nabla_{\mathbf{S}^{(m)}}Q$. 
We first present a useful lemma:
\begin{lemma}\label{lemma1}
Given an $n_1\times n_2$ matrix $\mathbf{A}$, the derivative of $\left\|\mathbf{A}\circ\mathbf{A}-\mathbf{A}\right\|_{\mathrm{F}}^2$ w.r.t. $\mathbf{A}$ is $\frac{d\left\|\mathbf{A}\circ\mathbf{A}-\mathbf{A}\right\|_{\mathrm{F}}^2}{d\mathbf{A}}=2(\mathbf{A}\circ\mathbf{A}-\mathbf{A})\circ(2\mathbf{A}-\mathbf{E})$
where $\mathbf{E}$ is an all-one matrix of size $n_1\times n_2$.
\end{lemma}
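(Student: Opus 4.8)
The plan is to reduce this matrix-valued derivative to a collection of independent scalar derivatives by exploiting the entrywise nature of both the Hadamard product and the Frobenius norm. First I would write the objective explicitly in terms of the entries of $\mathbf{A}$. Since $(\mathbf{A}\circ\mathbf{A})_{ij}=\mathbf{A}_{ij}^2$, the $(i,j)$-th entry of $\mathbf{A}\circ\mathbf{A}-\mathbf{A}$ is simply $\mathbf{A}_{ij}^2-\mathbf{A}_{ij}$, and hence by the definition $\left\|\cdot\right\|_{\mathrm{F}}^2=\sum_{i,j}(\cdot)_{ij}^2$ recalled in the introduction we obtain
\[
\left\|\mathbf{A}\circ\mathbf{A}-\mathbf{A}\right\|_{\mathrm{F}}^2=\sum_{i=1}^{n_1}\sum_{j=1}^{n_2}\big(\mathbf{A}_{ij}^2-\mathbf{A}_{ij}\big)^2.
\]
The crucial observation is that each summand depends on only the single scalar variable $\mathbf{A}_{ij}$, so the differentiation decouples completely: the partial derivative with respect to $\mathbf{A}_{ij}$ annihilates every term except the $(i,j)$-th one.

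Next I would apply the elementary scalar chain rule to that surviving term. Writing $\phi(t)=(t^2-t)^2$, we have $\phi'(t)=2(t^2-t)(2t-1)$, so
\[
\frac{\partial}{\partial \mathbf{A}_{ij}}\left\|\mathbf{A}\circ\mathbf{A}-\mathbf{A}\right\|_{\mathrm{F}}^2=2\big(\mathbf{A}_{ij}^2-\mathbf{A}_{ij}\big)\big(2\mathbf{A}_{ij}-1\big).
\]

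Finally I would reassemble these entrywise partials into matrix form. Using the convention that the $(i,j)$-th entry of the gradient matrix equals the partial derivative of the objective with respect to $\mathbf{A}_{ij}$, and identifying $\mathbf{A}_{ij}^2-\mathbf{A}_{ij}=(\mathbf{A}\circ\mathbf{A}-\mathbf{A})_{ij}$ together with $2\mathbf{A}_{ij}-1=(2\mathbf{A}-\mathbf{E})_{ij}$, the collection of scalar derivatives is precisely the $(i,j)$-th entry of $2(\mathbf{A}\circ\mathbf{A}-\mathbf{A})\circ(2\mathbf{A}-\mathbf{E})$, which is the claimed identity.

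As for difficulty, there is essentially no analytic obstacle here, since the Frobenius norm separates the variables and the Hadamard structure keeps distinct entries decoupled, so no cross-coupling between different $\mathbf{A}_{ij}$ ever arises. The only points requiring mild care are bookkeeping ones: verifying that the entrywise independence is genuine (so that the matrix derivative legitimately reduces to scalar chain-rule computations), and correctly translating the entrywise result back into the compact Hadamard-product form, so that the leading factor $2$ and the two matrix factors $(\mathbf{A}\circ\mathbf{A}-\mathbf{A})$ and $(2\mathbf{A}-\mathbf{E})$ line up exactly with the stated expression.
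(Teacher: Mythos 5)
Your proof is correct and follows essentially the same route as the paper's: expand the squared Frobenius norm entrywise as $\sum_{i,j}(\mathbf{A}_{ij}^2-\mathbf{A}_{ij})^2$, apply the scalar chain rule to each decoupled term, and reassemble the partials into the Hadamard-product form $2(\mathbf{A}\circ\mathbf{A}-\mathbf{A})\circ(2\mathbf{A}-\mathbf{E})$. Your write-up is slightly more explicit about why the entries decouple, but the argument is identical in substance.
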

\begin{proof}
Given the $(i,j)$-th element of $\mathbf{A}$ as $\mathbf{A}_{ij}$, then $\left\|\mathbf{A}\circ\mathbf{A}-\mathbf{A}\right\|_{\mathrm{F}}^2=\sum\nolimits_{i=1}^{{{n}_{1}}}{\sum\nolimits_{j=1}^{{{n}_{2}}}{{{\left( \mathbf{A}_{ij}^{2}-{{\mathbf{A}}_{ij}}\right)}^{2}}}}$. Therefore, $\frac{d\left\|\mathbf{A}\circ\mathbf{A}-\mathbf{A}\right\|_{\mathrm{F}}^2}{d\mathbf{A}_{ij}}=2(\mathbf{A}_{ij}^2-\mathbf{A}_{ij})(2\mathbf{A}_{ij}-1)$. Consequently, $\frac{d\left\|\mathbf{A}\circ\mathbf{A}-\mathbf{A}\right\|_{\mathrm{F}}^2}{d\mathbf{A}}=\frac{d\left\|\mathbf{A}\circ\mathbf{A}-\mathbf{A}\right\|_{\mathrm{F}}^2}{d\mathbf{A}_{ij}}\big|_{i=1\sim n_1, j=1\sim n_2}=2(\mathbf{A}\circ\mathbf{A}-\mathbf{A})\circ(2\mathbf{A}-\mathbf{E})$, which completes the proof.
\end{proof}\par
Based on Lemma~\ref{lemma1}, the gradient $\nabla_{\mathbf{S}^{(m)}}Q$ is derived as
\begin{equation}\label{eq:16}
\begin{split}
  &\nabla_{\mathbf{S}^{(m)}}Q\\
  &=2\left\{\mathbf{R}^{(m)}\mathbf{S}^{(m)}\!+\!\beta_0\mathbf{HS}^{(m)}\!+\!\beta_1\left[(\mathbf{S}^{(m)}\!\circ\!\mathbf{S}^{(m)}\!-\!\mathbf{S}^{(m)})\right.\right.\\
  &\qquad\qquad\quad\left.\left.\circ(2\mathbf{S}^{(m)}-\mathbf{E})
  +2\mathbf{S}^{(m)}(\mathbf{S}^{(m)\top}\mathbf{S}^{(m)}-\mathbf{I})\right] \right\}\\
  &=2\left\{\left[\mathbf{R}^{(m)}+\beta_0\mathbf{H}+\beta_1(2\mathbf{S}^{(m)}\mathbf{S}^{(m)\top}-\mathbf{I})\right]\mathbf{S}^{(m)}\right.\\
  &\qquad \qquad  \left.+\beta_1\!\left[2(\mathbf{S}^{(m)}\!\circ\!\mathbf{S}^{(m)}\!\circ\!\mathbf{S}^{(m)})\!-\!3(\mathbf{S}^{(m)}\circ\mathbf{S}^{(m)})\right]\right\}.
\end{split}
\end{equation}
As a result, $\mathbf{S}^{(m)}$ is updated by
\begin{equation}\label{eq:17}
  \mathbf{S}^{(m)}:=\mathbf{S}^{(m)}-\tau\nabla_{\mathbf{S}^{(m)}}Q,
\end{equation}
where $\tau$ is the stepsize satisfying the Wolfe line-search conditions \cite{wolfe1969convergence}. Note that in our algorithm, the updating of $\mathbf{S}^{(1)}, \mathbf{S}^{(2)},\cdots,\mathbf{S}^{(M)}$ are not correlated, so their updating can be efficiently conducted in parallel.\par
The entire BCD process for solving \eqref{eq:14} is presented in Algorithm~\ref{alg1}, and its convergence analysis is deferred to Section~\ref{sec:ProofOfConvergence}. Suppose the solution of Eq.~\eqref{eq:14} is $\bar{\mathbf{S}}^{*}$, then we force very small elements in $\bar{\mathbf{S}}^{*}$, \emph{e.g.} less than 0.001, to 0, and keep the other elements as they are. Therefore, the sparseness of the $i$-th row in $\bar{\mathbf{S}}^{*}$ indicates the overall difficulty of the $i$-th example in $\mathcal{B}$ evaluated by all the teachers. As a result, the candidate examples corresponding to the $s$ most non-sparse rows in $\bar{\mathbf{S}}^{*}$ are selected as the curriculum examples for current propagation. Moreover, by partitioning $\bar{\mathbf{S}}^{*}$ as $\bar{\mathbf{S}}^{*}=\left(\mathbf{S}^{(1)*},\mathbf{S}^{(2)*},\cdots,\mathbf{S}^{(M)*}\right)$ in which the $m$-th ($m=1,2,\cdots,M$) block corresponds to the optimal decision made by the $m$-th teacher, the value of ${\left( \sum\nolimits_{j}{\mathop{\mathbf{S}}_{ij}^{(m)*}} \right)}/{\left( \sum\nolimits_{j}{\mathop{\mathbf{S}}_{ij}^{*}} \right)}$ reflects the tendency of the $m$-th teacher to choose the $i$-th example in $\mathcal{B}$ as a curriculum example.

\begin{algorithm}[t]
\small
   \caption{BCD method for optimizing \eqref{eq:14}}
   \label{alg1}
\begin{algorithmic}[1]
   \STATE {\bfseries Input:} $\mathbf{R}^{(m)}$, $\beta_0$, $\beta_1$, $\varepsilon=10^{-4}$, $iter\_max=300$, initial $\mathbf{S}^{(m)}$ with $0\leq\mathbf{S}^{(m)}_{ij}\leq 1$, $iter=0$
   \REPEAT
   \STATE Compute $\bar{\mathbf{S}}$ as $\bar{\mathbf{S}}=\left(\mathbf{S}^{(1)},\mathbf{S}^{(2)},\cdots,\mathbf{S}^{(M)}\right)$
   \STATE Compute $\mathbf{H}$ as $\mathbf{H}_{ii}=\frac{1}{2\left\|\bar{\mathbf{s}}_i\right\|_2+\zeta}$
   \STATE // Update $\mathbf{S}^{(m)}$ ($m=1,2,\cdots,M$) in parallel 
   \FOR{$m=1:M$ }
   \STATE Compute gradient $\nabla_{\mathbf{S}^{(m)}}Q$ via Eq.~\eqref{eq:16}
   \STATE Decide the stepsize $\tau$ via Wolfe line-search \cite{wolfe1969convergence}
   \STATE $\mathbf{S}^{(m)}:=\mathbf{S}^{(m)}-\tau\nabla_{\mathbf{S}^{(m)}}Q$
   \ENDFOR
   \STATE $iter:=iter+1$
   \STATE // Check termination condition
   \UNTIL {$\left\| \bar{\mathbf{S}}^{(iter)}-\bar{\mathbf{S}}^{(iter-1)} \right\|_{\mathrm{F}}<\varepsilon$ or $iter=iter\_max$}
   \STATE {\bfseries Output:} the optimal solution $\bar{\mathbf{S}}^{*}$
\end{algorithmic}
\end{algorithm}

\section{Hybrid Label Propagation and Learning Feedback}
\label{sec:HybridLabelPropagationAndFeedback}
Given the simplest curriculum set $\mathcal{S}^{*}\!=\!\left\{\mathbf{x}_{1}^{*},\mathbf{x}_{2}^{*},\!\cdots\!,\mathbf{x}_{s}^{*}\right\}$ designated by the ensemble of teachers, the $M$ learners should ``learn'' these simple examples by propagating the labels from $\mathcal{L}$ to $\mathcal{S}^{*}$, and the obtained label matrices are $\mathbf{F}^{(1)},\cdots,\mathbf{F}^{(M)}$, respectively. Finally, $\mathbf{F}^{(1)},\cdots,\mathbf{F}^{(M)}$ are integrated into a consistent output $\mathbf{F}$ with the $i$-th row being the label vector of the $i$-th example, which is computed by
\begin{equation}\label{eq:18}
  \mathbf{F}_i=\sum\nolimits_{m=1}^{M}\omega_{i}^{(m)}\mathbf{F}_{i}^{(m)},
\end{equation}
and $\omega_{i}^{(m)}={\left( \sum\nolimits_{j}{\mathop{\mathbf{S}}_{ij}^{(m)*}} \right)}/{\left( \sum\nolimits_{j}{\mathop{\mathbf{S}}_{ij}^{*}} \right)}$. Note that the weight $\omega_{i}^{(m)}$ is equivalent to the tendency of the $m$-th teacher to choose the $i$-th example in $\mathcal{B}$ as a curriculum example. As such, a large weight is imposed on the label vector $\mathbf{F}_{i}^{(m)}$ of the $m$-th learner in generating $\mathbf{F}_i$ if the $m$-th teacher strongly recommends $\mathbf{x}_i$ as a curriculum example. This is because the strong recommendation from the $m$-th teacher indicates that it considers the examples $\mathbf{x}_i$ is quite simple for the $m$-th learner, therefore the learning result $\mathbf{F}_{i}^{(m)}$ is trustable and should be emphasized. In the $t$-th propagation round, the label matrices $\mathbf{F}_{i}^{(m)}$ ($m\!=\!1,2,\cdots,M$) are
\begin{equation}\label{eq:19}
  \mathbf{F}_{i}^{(m)[t]}\!=\!\left\{ \begin{split}
  & {{\mathbf{P}_{i}^{(m)}}}{{\mathbf{F}}^{[t-1]}},\ {{\mathbf{x}}_{i}}\in {{\mathcal{S}}^{*[1:t-1]}}\cup {{\mathcal{S}}^{*[t]}} \\
 & \mathbf{F}_{i}^{[0]} \  , \ \ \ \ \ \ \quad {{\mathbf{x}}_{i}}\in {{\mathcal{L}}^{[0]}}\cup ({{\mathcal{U}}^{[0]}}-{{\mathcal{S}}^{*[1:t]}}) \\
\end{split} \right.,
\end{equation}
where $\mathcal{S}^{*[1:t]}=\mathcal{S}^{*[1]}\cup\cdots\cup\mathcal{S}^{*[t]}$ with the superscript $[t]$ representing the $t$-th propagation round, and the iteration matrix $\mathbf{P}^{(m)}\!=\!{{\mathbf{D}}^{(m)-1}}\mathbf{W}^{(m)}$ is related to the specific base classifier (\emph{i.e.} learner). 
Eq.~\eqref{eq:19} suggests that the labels of the $t$-th curriculum and previously ``learned'' examples will change during the $t$-th propagation, whereas the labels of the initially labeled examples in ${{\mathcal{L}}^{[0]}}$ and the unclassified unlabeled examples in ${{\mathcal{U}}^{[0]}}\!-\!{{\mathcal{S}}^{*[1:t]}}$ are kept unchanged. The initial state for ${{\mathbf{x}}_{i}}$'s label vector $\mathbf{F}_{i}^{[0]}$ is
\begin{equation}\label{eq:20}
\!\!\mathbf{F}_{i}^{[0]}\!\!:=\!\left\{ \begin{split}
  &\!\!\! \underbrace{\left( {1}/{c},\cdots ,{1}/{c} \right)}_{c},\ \ \ \ \ \ \ \ \ \ \ \ \ \ \ \ \ \ \  {{\mathbf{x}}_{i}}\in {{\mathcal{U}}^{[0]}} \\
 & \!\!\!\left( 0,\cdots ,\underset{\underset{j-th\ element}{\mathop{\downarrow }}}{\mathop{1}},\cdots ,0 \right)\!\!,\ \ {{\mathbf{x}}_{i}}\!\in\! {{\mathcal{C}}_{j}}\!\in\! \mathcal{L}^{(0)} \\
\end{split} \right., \!\!\!
\end{equation}
where $c$ is the number of classes as defined in the introduction. The Eqs.~\eqref{eq:19} and \eqref{eq:20} together maintain the probability interpretation $\sum_{j=1}^c \mathbf{F}_{ij}^{[t]}=1$ for
any example $\mathbf{x}_i$ and all $t$-th ($t=0,1,2,\cdots$) propagation rounds.\par
When the $t$-th propagation is finished, we require the learners to deliver a learning feedback to the ensemble of teachers so that the teachers can determine the proper $(t\!+\!1)$-th curriculum $\mathcal{S}^{*[t+1]}$. Intuitively, if the $t$-th learning result is satisfactory, the teachers may assign a ``heavier'' curriculum to the learners for the next propagation, otherwise they should relieve the burden for the learners by decreasing the amount of curriculum examples in $\mathcal{S}^{*[t+1]}$. To achieve this effect, here we use the feedback function designed in \cite{Gong2016TLLT}, which is
\begin{equation}\label{eq:21}
\begin{split}
  g({\mathbf{F}}^{[t]})&=\exp \left[ -\gamma\frac{1}{{{s}^{[t]}}}H( {\mathbf{F}^{[t]}}) \right]\\
  &=\exp \left[ \frac{\gamma}{s^{[t]}} \sum\nolimits_{i=1}^{s^{[t]}}\sum\nolimits_{j=1}^c \big( \mathbf{F}^{[t]} \big)_{ij}
  \log_c\big( \mathbf{F}^{[t]} \big)_{ij} \right],
\end{split}
\end{equation}
where $\gamma $ is the parameter controlling the learning rate. A small $\gamma $ leads to more examples incorporated into the curriculum $\mathcal{S}^{*[t+1]}$, so less propagation rounds are needed for the learners to classify all the unlabeled examples. However, learning too ``heavy'' curriculum at one time will make the learners more easily to make mistakes. Therefore, both learning speed and learning accuracy should be considered when choosing $\gamma$.\par
Based on Eq.~\eqref{eq:21}, the number of examples included in the $(t+1)$-th curriculum is:
\begin{equation}\label{eq:22}
  {{s}^{[t+1]}}=\left\lceil {{b}^{[t+1]}}\cdot g({{\mathbf{F}}^{[t]}}) \right\rceil,
\end{equation}
where ${{b}^{[t+1]}}$ is the size of set $\mathcal{B}^{[t+1]}$ in the $(t\!+\!1)$-th iteration, and $\left\lceil \cdot  \right\rceil$ rounds up the element to the nearest integer.\par

Above ensemble teaching and hybrid learning iterates until all the unlabeled examples are propagated (\emph{i.e.} $\mathcal{U}=\varnothing$), and the obtained label matrix is denoted as $\mathring{\mathbf{F}}$. Similar to \cite{Gong2016TLLT}, starting from $\mathring{\mathbf{F}}$, we adopt the following Eq.~\eqref{eq:23} to drive the propagation process of every learner to the steady state:
\begin{flalign}\label{eq:23}
{{\mathring{\mathbf{F}}}^{*(m)}}=(1-\theta ){{(\mathbf{I}-\theta \mathbf{P}^{(m)})}^{-1}}\mathring{\mathbf{F}},
\end{flalign}
where the parameter $\theta$ is set to 0.05. Therefore, the final produced label matrix is ${\mathring{\mathbf{F}}}^{*}\!=\!\frac{1}{M}\sum_{m=1}^{M}\mathring{\mathbf{F}}^{*(m)}$.
As a consequence, the example ${{\mathbf{x}}_{i}}$ is classified into the $j$-th class, which satisfies $j\!=\!\arg\max_{j'\in\{1,\cdots,c\}} \mathring{\mathbf{F}}_{ij'}^*$. The complete HyDEnT algorithm is summarized in Algorithm~\ref{alg2}.

\noindent \textbf{Discussion}: Although the model established in this paper focuses on ensemble learning, it can be easily adapted to handling multi-modal cases \cite{wang2012unsupervisedmetric,xie2013multi}. Specifically, we may associate each modality with a teacher and a learner, and then combine the propagation outputs of different modalities as explained in Section~\ref{sec:HybridLabelPropagationAndFeedback}. To achieve such combination, we may require all teachers make consistent decisions on determining the curriculum examples, which is very similar to the idea detailed in Section~\ref{sec:CurriculumGenerationViaEnsembleTeaching}.

\begin{algorithm}[t]
\small
   \caption{The summarization of HyDEnT algorithm}
   \label{alg2}
\begin{algorithmic}[1]
   \STATE {\bfseries Input:} $l$ labeled examples $\mathcal{L}\!=\!\left\{\mathbf{x}_{1},\!\cdots\!,\mathbf{x}_{l}\right\}$ with known labels $y_1,\!\cdots\!,y_l$; $u$ unlabeled examples $\mathcal{U}\!=\!\left\{\mathbf{x}_{l\!+\!1},\!\cdots\!,\mathbf{x}_{l\!+\!u}\right\}$ with unknown labels  $y_{l\!+\!1},\!\cdots\!,y_{l\!+\!u}$; Parameters $\beta_0$, $\beta_1$, $\gamma$;
   \STATE $//$ Pre-compute variables of each learner
   \STATE $\forall~ m\!=\!1,\!\cdots\!,M$, compute adjacency matrix $\mathbf{W}^{(m)}$, graph Laplacian $\mathbf{L}^{(m)}$, iteration matrix $\mathbf{P}^{(m)}$, covariance matrix $\mathbf{\Sigma}^{(m)}$, and pairwise commute time $T(\mathbf{x}_i,\mathbf{x}_{i^{\prime}})$, $\forall \mathbf{x}_i,\mathbf{x}_{i^{\prime}}\in\mathcal{L}\cup\mathcal{U}$;
   \REPEAT
   \STATE $//$ Ensemble teaching
   \STATE Compute $\mathbf{R}^{(m)}$ appeared in Eq.~\eqref{eq:9};
   \STATE Generate optimal curriculum $\mathcal{S}^{*}$ agreed by all the teachers via solving Eq.~\eqref{eq:14} (Algorithm~\ref{alg1});
   \STATE $//$ hybrid label propagation
   \STATE For each learner, compute the label matrix $\mathbf{F}^{(m)}$ via Eq.~\eqref{eq:19};

   \STATE Compute the integrated label matrix $\mathbf{F}$ via Eq.~\eqref{eq:18};
   \STATE $//$ Establish learning feedback
   \STATE Compute the value of feedback function $g(\mathbf{F})$ via Eq.~\eqref{eq:21};
   \STATE Compute the size of $(t\!+\!1)$-th curriculum $s^{[t\!+\!1]}$ via Eq.~\eqref{eq:22};
   \STATE $//$ Update sets
   \STATE $\mathcal{L}\!:=\!\mathcal{L}\cup \mathcal{S}^{*}$; $\mathcal{U}\!:=\!\mathcal{U}\!-\!\mathcal{S}^{*}$;
   \UNTIL{$\mathcal{U}=\varnothing$ };
   \STATE Drive the propagations of all $M$ learners to the steady states via Eq.~\eqref{eq:23}, and the resultant label matrices are $\mathring{\mathbf{F}}^{*(1)},\cdots,\mathring{\mathbf{F}}^{*(M)}$;

   \STATE Compute the final label matrix by ${\mathring{\mathbf{F}}}^{*}\!=\!\frac{1}{M}\sum_{m=1}^{M}\mathring{\mathbf{F}}^{*(m)}$;
  \STATE{Decide the label of original unlabeled example $\mathbf{x}_i$ as $y_i\!=\!\arg\max_{j\in\{1,\cdots,c\}} \mathring{\mathbf{F}}_{ij}^*$;}
   \STATE {\bfseries Output:} Class labels $y_{l\!+\!1},\cdots,y_{l\!+\!u}$;
\end{algorithmic}
\end{algorithm}

\section{Convergence Analyses}
\label{sec:ProofOfConvergence}
In this section, we discuss the convergence property of the BCD method in Algorithm~\ref{alg1}. Before proving the convergence of Algorithm~\ref{alg1}, we first present a useful lemma:\\
\begin{lemma}\label{lemma:Feiping}\cite{Nie2010Efficient}
Given any two vectors $\mathbf{a},\mathbf{b}\neq\mathbf{0}$, the following inequality holds:
\begin{equation}
  \left\|\mathbf{a}\right\|_2-\frac{\left\|\mathbf{a}\right\|_2^2}{2\left\|\mathbf{b}\right\|_2}
  \leq \left\|\mathbf{b}\right\|_2-\frac{\left\|\mathbf{b}\right\|_2^2}{2\left\|\mathbf{b}\right\|_2}.\nonumber
\end{equation}
\end{lemma}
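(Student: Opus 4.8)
The plan is to reduce this vector inequality to an elementary scalar inequality by observing that both sides depend on $\mathbf{a}$ and $\mathbf{b}$ only through their Euclidean magnitudes. First I would set $x=\|\mathbf{a}\|_2$ and $y=\|\mathbf{b}\|_2$, noting $x\geq 0$ and, crucially, $y>0$ because $\mathbf{b}\neq\mathbf{0}$ guarantees a nonzero denominator. The right-hand side then simplifies at once to $y-\frac{y^2}{2y}=\frac{y}{2}$, so the claim reduces to establishing $x-\frac{x^2}{2y}\leq\frac{y}{2}$, a statement purely about the two nonnegative scalars $x$ and $y$.

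Next I would clear the denominator. Since $y>0$, multiplying both sides by $2y$ preserves the direction of the inequality and yields the equivalent form $2xy-x^2\leq y^2$. Moving every term to one side gives $0\leq x^2-2xy+y^2=(x-y)^2$, which holds for all real $x,y$ by the nonnegativity of a square. Reversing this chain of equivalences (each step being a multiplication by the positive quantity $2y$ or a purely algebraic rearrangement) recovers the original inequality, completing the argument.

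There is essentially no obstacle here: the result is a disguised completing-the-square identity, and the whole proof is two or three lines once the substitution is made. The only point that genuinely requires the hypothesis is $\mathbf{b}\neq\mathbf{0}$, which is precisely what legitimizes dividing by $\|\mathbf{b}\|_2$; the assumption $\mathbf{a}\neq\mathbf{0}$ is not actually used in deriving the inequality and is presumably stated only because the lemma is later applied inside expressions where $\|\mathbf{a}\|_2$ appears in a denominator. I would also remark that equality is attained exactly when $x=y$, i.e. $\|\mathbf{a}\|_2=\|\mathbf{b}\|_2$, confirming that the bound is tight, which is the feature that makes it useful as a surrogate in the subsequent monotonic-decrease argument for the $l_{2,1}$ term of the BCD iteration.
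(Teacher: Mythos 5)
Your proof is correct: reducing to scalars $x=\|\mathbf{a}\|_2$, $y=\|\mathbf{b}\|_2>0$ and observing that the inequality is equivalent to $(x-y)^2\geq 0$ is exactly the standard argument. The paper itself gives no proof of this lemma---it is imported verbatim with the citation \cite{Nie2010Efficient}---and your completing-the-square derivation coincides with the proof in that reference (including your observation that $\mathbf{a}\neq\mathbf{0}$ is not actually needed, only $\mathbf{b}\neq\mathbf{0}$), so there is nothing further to reconcile.
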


Based on Lemma~\ref{lemma:Feiping}, we have the following theorem to guarantee the convergence of Algorithm~\ref{alg1}.
\begin{theorem}\label{thm:convergence}
Algorithm~\ref{alg1} monotonically decreases the value of objective function in \eqref{eq:14} until convergence.
\end{theorem}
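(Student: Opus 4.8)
The plan is to read Algorithm~\ref{alg1} as an iteratively-reweighted descent scheme and to use Lemma~\ref{lemma:Feiping} to convert descent of a smooth \emph{surrogate} into descent of the true, nonsmooth objective. Write $Q(\bar{\mathbf{S}})=g(\bar{\mathbf{S}})+\beta_0\|\bar{\mathbf{S}}\|_{2,1}$, where $g(\bar{\mathbf{S}})=\sum_{m=1}^M\mathrm{tr}\big(\mathbf{S}^{(m)\top}\mathbf{R}^{(m)}\mathbf{S}^{(m)}\big)+\beta_1\sum_{m=1}^M\big(\|\mathbf{S}^{(m)}\circ\mathbf{S}^{(m)}-\mathbf{S}^{(m)}\|_{\mathrm{F}}^2+\|\mathbf{S}^{(m)\top}\mathbf{S}^{(m)}-\mathbf{I}\|_{\mathrm{F}}^2\big)$ collects all smooth terms. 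For a fixed diagonal reweighting $\mathbf{H}$ define the surrogate $\widetilde{Q}(\bar{\mathbf{S}};\mathbf{H})=g(\bar{\mathbf{S}})+\beta_0\,\mathrm{tr}(\bar{\mathbf{S}}^{\top}\mathbf{H}\bar{\mathbf{S}})$; the gradient in Eq.~\eqref{eq:16} is exactly $\nabla_{\mathbf{S}^{(m)}}\widetilde{Q}$, so each inner step of Algorithm~\ref{alg1} is a descent step on $\widetilde{Q}$ with $\mathbf{H}$ frozen at the value $\mathbf{H}^{[k]}$ computed from the current sweep $\bar{\mathbf{S}}^{[k]}$ (here $k$ indexes the outer BCD iteration).

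First I would establish the surrogate-decrease inequality. With $\mathbf{H}=\mathbf{H}^{[k]}$ held fixed, the three pieces of $\widetilde{Q}$ are all separable across the blocks $\mathbf{S}^{(1)},\dots,\mathbf{S}^{(M)}$, since $\mathbf{R}^{(m)}$ and the $\beta_1$ penalties involve only $\mathbf{S}^{(m)}$ and $\mathrm{tr}(\bar{\mathbf{S}}^{\top}\mathbf{H}^{[k]}\bar{\mathbf{S}})=\sum_m\mathrm{tr}(\mathbf{S}^{(m)\top}\mathbf{H}^{[k]}\mathbf{S}^{(m)})$; this is precisely what licenses the parallel block updates. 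For each block the update moves along the descent direction $-\nabla_{\mathbf{S}^{(m)}}\widetilde{Q}$ with $\tau$ chosen by Wolfe line-search, so the Armijo (sufficient-decrease) condition forces each block's contribution to $\widetilde{Q}$ not to increase, and summing over $m$ gives
\[
g(\bar{\mathbf{S}}^{[k+1]})+\beta_0\,\mathrm{tr}\big(\bar{\mathbf{S}}^{[k+1]\top}\mathbf{H}^{[k]}\bar{\mathbf{S}}^{[k+1]}\big)\le g(\bar{\mathbf{S}}^{[k]})+\beta_0\,\mathrm{tr}\big(\bar{\mathbf{S}}^{[k]\top}\mathbf{H}^{[k]}\bar{\mathbf{S}}^{[k]}\big).
\]

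Next I would invoke Lemma~\ref{lemma:Feiping} row by row, with $\mathbf{a}=\bar{\mathbf{S}}_i^{[k+1]}$ and $\mathbf{b}=\bar{\mathbf{S}}_i^{[k]}$. Summing the resulting inequalities over all rows $i$ and recognising that $\mathrm{tr}(\bar{\mathbf{S}}^{\top}\mathbf{H}^{[k]}\bar{\mathbf{S}})=\sum_i\|\bar{\mathbf{S}}_i\|_2^2/(2\|\bar{\mathbf{S}}_i^{[k]}\|_2)$ yields
\[
\|\bar{\mathbf{S}}^{[k+1]}\|_{2,1}-\mathrm{tr}\big(\bar{\mathbf{S}}^{[k+1]\top}\mathbf{H}^{[k]}\bar{\mathbf{S}}^{[k+1]}\big)\le \|\bar{\mathbf{S}}^{[k]}\|_{2,1}-\mathrm{tr}\big(\bar{\mathbf{S}}^{[k]\top}\mathbf{H}^{[k]}\bar{\mathbf{S}}^{[k]}\big).
\]
Multiplying this by $\beta_0$ and adding it to the surrogate-decrease inequality makes the two reweighted-quadratic traces cancel exactly, leaving $Q(\bar{\mathbf{S}}^{[k+1]})\le Q(\bar{\mathbf{S}}^{[k]})$, the claimed monotone decrease. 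Since $\mathbf{R}^{(m)}$ is positive semidefinite (a Schur complement of the covariance $\mathbf{\Sigma}$ plus the nonnegative diagonal $\mathbf{G}$) and every other term is nonnegative, $Q\ge 0$; a monotone sequence bounded below converges, which gives the ``until convergence'' clause.

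The main obstacle is exactly this cancellation-by-addition bridge: the Wolfe/Armijo step only certifies descent of the \emph{reweighted} surrogate $\widetilde{Q}$, and it is Lemma~\ref{lemma:Feiping} that upgrades this to genuine descent of the $l_{2,1}$ term, so the two inequalities must be set up with the identical quadratic form $\mathrm{tr}(\bar{\mathbf{S}}^{\top}\mathbf{H}^{[k]}\bar{\mathbf{S}})$ for the cancellation to be clean. A secondary technicality is the smoothing $\mathbf{H}_{ii}=1/(2\|\bar{\mathbf{S}}_i\|_2+\zeta)$: Lemma~\ref{lemma:Feiping} as stated needs $\bar{\mathbf{S}}_i\neq\mathbf{0}$, so for (near-)vanishing rows I would apply the lemma to the smoothed row-norm and argue that Algorithm~\ref{alg1} in fact monotonically decreases the correspondingly $\zeta$-smoothed objective, recovering~\eqref{eq:14} as $\zeta\to0$.
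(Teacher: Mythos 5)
Your proposal is correct and follows essentially the same route as the paper's own proof: the same decomposition of $Q$ into the smooth part and the $\beta_0\left\|\bar{\mathbf{S}}\right\|_{2,1}$ term, the same frozen-$\mathbf{H}$ surrogate-decrease inequality (the paper's Eq.~\eqref{eq:26}), and the same row-wise application of Lemma~\ref{lemma:Feiping} followed by summation so that the reweighted quadratic traces cancel and yield $Q(\bar{\mathbf{S}}_{new})\leq Q(\bar{\mathbf{S}})$, with boundedness below giving convergence. The only difference is that you spell out details the paper leaves implicit---the block separability of the surrogate that licenses the parallel Wolfe-line-search updates, the positive semidefiniteness of $\mathbf{R}^{(m)}$ behind the lower bound $Q\geq 0$, and the $\zeta$-smoothing technicality in the definition of $\mathbf{H}$---which are refinements of, not departures from, the paper's argument.
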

\begin{proof}
To facilitate the proof, we decompose \eqref{eq:14} as $Q(\bar{\mathbf{S}})=Q_1(\bar{\mathbf{S}})+Q_2(\bar{\mathbf{S}})$, where
\begin{equation}\label{eq:24}
\begin{split}
Q_1(\bar{\mathbf{S}})&=\sum\nolimits_{m=1}^M\left[\mathrm{tr}\big( \mathbf{S}^{(m)\top}\mathbf{R}^{(m)}\mathbf{S}^{(m)} \big)\right. \\
&\left.+\beta_1\!\left(\!\left\|\mathbf{S}^{(m)}\!\circ\!\mathbf{S}^{(m)}\!-\!\mathbf{S}^{(m)}\right\|_\mathrm{F}^2
\!+\!\left\|\mathbf{S}^{(m)\top}\!\mathbf{S}^{(m)}\!-\!\mathbf{I}\right\|_{\mathrm{F}}^2\right)\right]
\end{split}
\end{equation}
and
\begin{equation}\label{eq:25}
Q_2(\bar{\mathbf{S}})=\beta_0\left\|\bar{\mathbf{S}}\right\|_{2,1}
=\beta_0\mathrm{tr}(\bar{\mathbf{S}}^{\top}\mathbf{H}\bar{\mathbf{S}}),
\end{equation}
where $\mathbf{H}$ is a diagonal matrix defined in Section~\ref{sec:CurriculumGenerationViaEnsembleTeaching_Optimization}.\par
Suppose that after one iteration, the variables $\bar{\mathbf{S}}$ is updated as $\bar{\mathbf{S}}_{new}$, then according to the definition of $\mathbf{H}$, we have
\begin{equation}\label{eq:26}
  Q_1(\bar{\mathbf{S}}_{new})+\beta_0{\sum\limits_{i=1}^{b}{\frac{\left\| {(\bar{\mathbf{S}}_{new})}_{i} \right\|_{2}^{2}}{2\left\| \bar{\mathbf{S}}_{i} \right\|_{2}}}}\leq Q_1(\bar{\mathbf{S}})+\beta_0{\sum\limits_{i=1}^{b}{\frac{\left\| {{\bar{\mathbf{S}}}_{i}} \right\|_{2}^{2}}{2\left\| \bar{\mathbf{S}}_{i} \right\|_{2}}}},
\end{equation}
where ${(\bar{\mathbf{S}}_{new})}_{i}$ represents the $i$-th row of matrix $\bar{\mathbf{S}}_{new}$. Besides, according to Lemma~\ref{lemma:Feiping}, for each $i$ we obtain
\begin{equation}\label{eq:27}
    {{\left\| {{(\bar{\mathbf{S}}_{new})}_{i}} \right\|}_{2}}-\frac{\left\| {{(\bar{\mathbf{S}}_{new})}_{i}} \right\|_{2}^{2}}{2{{\left\| \bar{\mathbf{S}}_{i} \right\|}_{2}}}\le {{\left\| \bar{\mathbf{S}}_{i} \right\|}_{2}}-\frac{\left\| \bar{\mathbf{S}}_{i}\right\|_{2}^{2}}{2{{\left\| \bar{\mathbf{S}}_{i} \right\|}_{2}}}.
\end{equation}
Therefore, it is straightforward to see
\begin{equation}\label{eq:28}
    \beta_0\!\sum_{i=1}^{b}\!\left[\!{{\left\| {{(\bar{\mathbf{S}}_{new})}_{i}} \right\|}_{2}}\!-\!\frac{\left\| {{(\bar{\mathbf{S}}_{new})}_{i}} \right\|_{2}^{2}}{2{{\left\| \bar{\mathbf{S}}_{i} \right\|}_{2}}}\right]
    \!\le\! \beta_0\!\sum_{i=1}^{b}\!\left[{{\left\| \bar{\mathbf{S}}_{i} \right\|}_{2}}\!-\!\frac{\left\| \bar{\mathbf{S}}_{i} \right\|_{2}^{2}}{2{{\left\| \bar{\mathbf{S}}_{i} \right\|}_{2}}}\!\right]\!\!.
\end{equation}
By adding Eq.~\eqref{eq:26} and Eq.~\eqref{eq:28}, we immediately have
\begin{equation}\label{eq:29}
  Q_1(\bar{\mathbf{S}}_{new})+\beta_0{\sum\nolimits_{i=1}^{b}{{\left\| {{(\bar{\mathbf{S}}_{new})}_{i}} \right\|_{2}}}}
  \leq Q_1(\bar{\mathbf{S}})+\beta_0{\sum\nolimits_{i=1}^{b}{{\left\| {\bar{\mathbf{S}}_{i}} \right\|_{2}}}}.
\end{equation}
By noticing $\sum\nolimits_{i=1}^{b}{{\left\| {(\bar{\mathbf{S}}_{new})_{i}} \right\|_{2}}}=\left\|\bar{\mathbf{S}}_{new}\right\|_{2,1}$ and $\sum\nolimits_{i=1}^{b}{{\left\| {\bar{\mathbf{S}}_{i}} \right\|_{2}}}=\left\|\bar{\mathbf{S}}\right\|_{2,1}$, it is obvious that Eq.~\eqref{eq:29} is equivalent to
\begin{equation}\label{eq:30}
  Q(\bar{\mathbf{S}}_{new}) \leq Q(\bar{\mathbf{S}}).
\end{equation}
Therefore, the objective function in Eq.~\eqref{eq:14} monotonically decreases during the iterations.\par
Besides, it can be easily verified that the objective function in Eq.~\eqref{eq:14} has the lower bound 0. Consequently, Algorithm~\ref{alg1} will finally converge and thus Theorem~\ref{thm:convergence} is proved.
\end{proof}

\section{Complexity Analyses}
\label{sec:ComplexityAnalyses}
To analyze the time complexity of Algorithm~\ref{alg2}, we first calculate the complexity of Algorithm~\ref{alg1}, as it is used in Line 7 of Algorithm~\ref{alg2}. In single propagation round, the complexities of Line~4 and Line~7 in Algorithm~\ref{alg1} are $\mathcal{O}(bsM)$ and $\mathcal{O}((b^{2}s+bs)M)$, respectively, where $b$ is the size of neighbouring set $\mathcal{B}$ and $s$ is the amount of curriculum examples in $\mathcal{S}^{*}$. However, both $b$ and $s$ vary in different propagation rounds, and they are not larger than $u$ ($u$ is the number of originally unlabeled examples), so the complexities of Line~4 and Line~7 are bounded by $\mathcal{O}(Mu^2)$ and $\mathcal{O}(Mu^3)$ for simplicity. Suppose Lines~2$\sim$12 are iterated $T_1$ times, then the complexity of Algorithm~\ref{alg1} is $\mathcal{O}(T_{1}Mu^3)$.\par

Regarding Algorithm~\ref{alg2}, the graph construction and computation of $\mathbf{\Sigma}$ for each teacher-learner pair in Line~3 take $\mathcal{O}(n^2)$ and $\mathcal{O}(n^3)$ complexities, respectively. To compute the commute time between all pairs of examples in Eq.~\eqref{eq:9}, one has to conduct eigen-decomposition on $\mathbf{L}^{(m)}$, of which the complexity is $\mathcal{O}(n^3)$. The complexity of Line 7 is bounded by $\mathcal{O}(T_{1}Mu^3)$ as explained above. It is also easy to find that the complexities of Lines~9, 12 and 17 are $\mathcal{O}(Munc)$, $\mathcal{O}(uc)$, and $\mathcal{O}(Mn^2)$, respectively, where the Eq.~\eqref{eq:23} in Line 17 is computed by transforming it into a group of linear equations with an $n\times n$ coefficient matrix. Therefore, taking all the above results into consideration and suppose that Lines~4$\sim$16 are repeated $T_2$ times, our HyDEnT algorithm takes $\mathcal{O}\big(Mn^3+(T_{1}Mu^3+Munc+uc)T_2\big)$ complexity. From above analyses, we see that the most computationally expensive steps in our algorithm lie in the calculations of covariance matrix $\mathbf{\Sigma}^{(m)}$ and commute time between all pairs of examples, of which the complexities are $\mathcal{O}(Mn^3)$ for all $M$ teachers. Fortunately, they can be pre-computed ahead of conducting the iterative propagation process. 
Therefore, the complexity of our HyDEnT is acceptable.

\section{Experimental Results}
\label{sec:Experiments}
In this section, we provide thorough experimental results to show the effectiveness of the proposed HyDEnT algorithm. We first demonstrate that both ensemble teaching and hybrid label propagation incorporated in our method are beneficial to obtaining the encouraging performance, and then compare HyDEnT with other state-of-the-art approaches on various classification tasks related to image, video, and audio.\par

In this paper, we utilize Harmonic Functions (HF) proposed in \cite{Zhu2002Report} and Fick's Law Assisted Propagation (FLAP) presented in \cite{Gong2015Fick} as two learners in HyDEnT, as they are state-of-the-art propagation algorithms developed so far. The iterative propagation rules in HF and FLAP are different. HF conducts label propagation on the adjacency matrix $\mathbf{W}$ with ${{\mathbf{W}}_{ij}}\!=\!\exp \left( -{\left\| {{\mathbf{x}}_{i}}\!-\!{{\mathbf{x}}_{j}} \right\|^{2}}/{(2{{\sigma}^{2}})} \right)$ ($\sigma $ is the Gaussian kernel width) if $\mathbf{x}_i$ and $\mathbf{x}_j$ are connected. Differently, FLAP relates $\mathbf{W}_{ij}$ to the diffusion distance $d_{ij}$ between $\mathbf{x}_i$ and $\mathbf{x}_j$, and also favors self-loop around each $\mathbf{x}_i$. We hope that these two learners complement to each other to yield good performance.

\subsection{Verification of Our Method}
\label{sec:VerificationOfOurMethod}

\begin{figure}
  \centering
  \includegraphics[width=\linewidth]{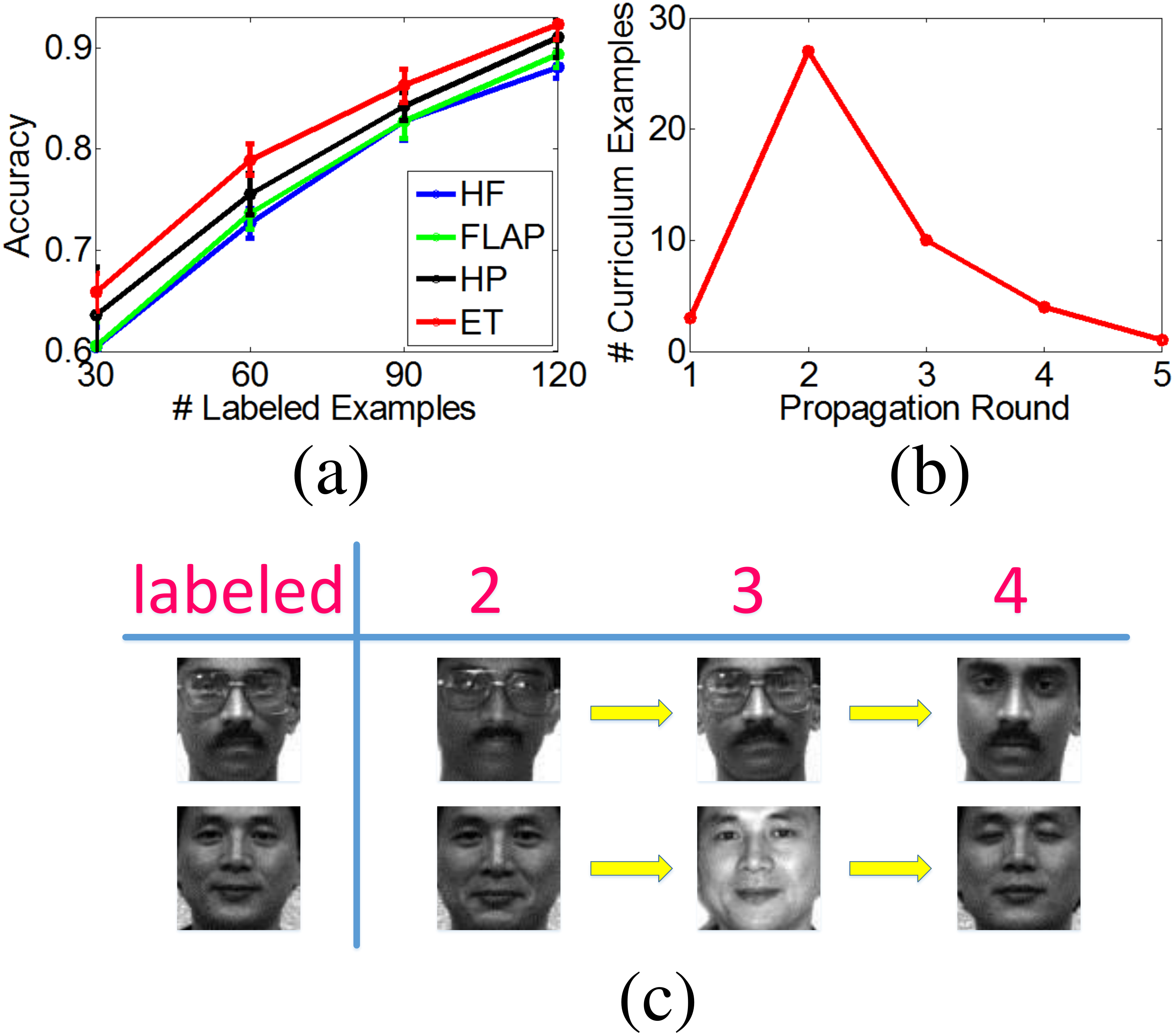}\\
  \caption{Verifications and illustrations of our developed HyDEnT algorithm. (a) compares the performances of two adopted learners (``HF'' and ``FLAP''), hybrid propagation (``HP''), and ensemble teaching (``ET''). (b) plots the number of selected curriculum examples in different propagation rounds. (c) presents the simplest face images of two individuals selected by teachers during the 2nd$\sim$4th propagation rounds.}\label{fig:AlgorithmVerification}
  \vspace{-10pt}
\end{figure}

As mentioned in the introduction, our algorithm contains two critical operations for boosting the performance, \emph{i.e.} ensemble teaching and hybrid label propagation. Here we use \emph{Yale}\footnote{\url{http://cvc.yale.edu/projects/yalefaces/yalefaces.html}} face recognition dataset to empirically demonstrate the usefulness of both operations.\par

The \emph{Yale} dataset contains 165 grayscale images of 15 individuals, and each individual constitutes a class. Every individual has 11 face images covering a variety of facial expressions and configurations such as normal, happy, wearing glasses, and so on. The resolution of every face image is 64$\times$64, so we directly rearrange each image to a 4096-dimensional long vector as input for our experiment.\par

We first present the results of two deployed learners HF \cite{Zhu2002Report} and FLAP \cite{Gong2015Fick}. Specifically, we investigate the classification accuracies under different numbers of labeled examples $l$, and for each $l$ the reported accuracy is averaged over the outputs of ten independent implementations. The splits of labeled set and unlabeled set are different in these ten implementations, however in one implementation such split is identical for all the compared settings. To show the effectiveness of Hybrid Propagation (denoted ``HP''), we average the generated label matrices of HF and FLAP in each propagation, and report the accuracy when the propagation process converges. To further show the improvements brought by Ensemble Teaching (dubbed ``ET''), we utilize the teaching algorithm introduced in Section~\ref{sec:CurriculumGenerationViaEnsembleTeaching} to select the simplest curriculum examples during each propagation round, so that the unlabeled examples are classified from simple to difficult in the entire propagation process. Note that the ET setting is actually the HyDEnT algorithm developed in this paper.\par

We build 5 nearest neighborhood (5-NN) graph for all the settings including HF, FLAP, HP, and ET, and the involved Gaussian kernel width $\sigma$ is set to 1. Besides, both $\beta_0$ and $\beta_1$ appeared in Eq.~\eqref{eq:14} are tuned to 100. The comparison results are presented in Fig.~\ref{fig:AlgorithmVerification}(a). It is observed that the two learners HF and FLAP perform comparably on this dataset with different choices of $l$ (see blue and green curves). However, when they are combined together and implemented in a hybrid way, the deficiency of each of them can be repaired and thus better results can be achieved (see black curve). Moreover, when the learners are ``taught'' by the ensemble of teachers, we notice that the classification accuracy can be further enhanced (see red curve). Above observations suggest that the teaching algorithm developed in Section~\ref{sec:CurriculumGenerationViaEnsembleTeaching} and hybrid propagation explained in Section~\ref{sec:HybridLabelPropagationAndFeedback} are helpful for obtaining satisfactory performance.\par

We also plot the number of selected curriculum examples (\emph{i.e.} $s$) in one independent implementation when $l=120$. From Fig.~\ref{fig:AlgorithmVerification}(b), we see that HyDEnT needs five propagation rounds to classify all the unlabeled examples, and most of the unlabeled examples are classified in the middle stage (\emph{e.g.} 2nd and 3rd propagations). The reason may be that at this stage the learners have gained the richest knowledge accumulated in the early stage. However, when the propagation goes to later stage, the difficulty of curriculums gradually increases, so the teachers become very ``conservative'' and assign less examples to the learners in one propagation. To demonstrate this point, we present some face images selected by the teachers during the 2nd$\sim$4th propagation rounds (see Fig.~\ref{fig:AlgorithmVerification}(c)). We see that the images chosen for the 2nd and 3rd propagations are very similar to the labeled images, and thus they can be easily and efficiently learned. In contrast, the curriculum face images for the 4th propagation look very different from the previous examples. The individual in the first row takes off his glasses, and the man in the second row closes his eyes. Such appearance variations make the examples difficult to learn, so their classifications are postponed by the teachers. As a result, Fig.~\ref{fig:AlgorithmVerification}(c) indicate that the unlabeled examples are generally learned via a simple-to-difficult order, which is consistent with our initial anticipation.\par

Finally, it can be noted that $\beta_0$ and $\beta_1$ in Eq.~\eqref{eq:14} are two critical tuning parameters in our proposed HyDEnT algorithm, so here we investigate whether the output of HyDEnT is sensitive to the variations of these two parameters. To be specific, we change one of $\beta_0$ and $\beta_1$ from $10^0$ to $10^3$ while keeping the other one fixed, and then examine the classification accuracies generated by HyDEnT. The results presented in Fig.~\ref{fig:ParaSensitivity} clearly indicate that HyDEnT is very robust to the variations of
these two parameters, so they can be easily tuned for practical use.

\begin{figure}
  \centering
  \includegraphics[width=\linewidth]{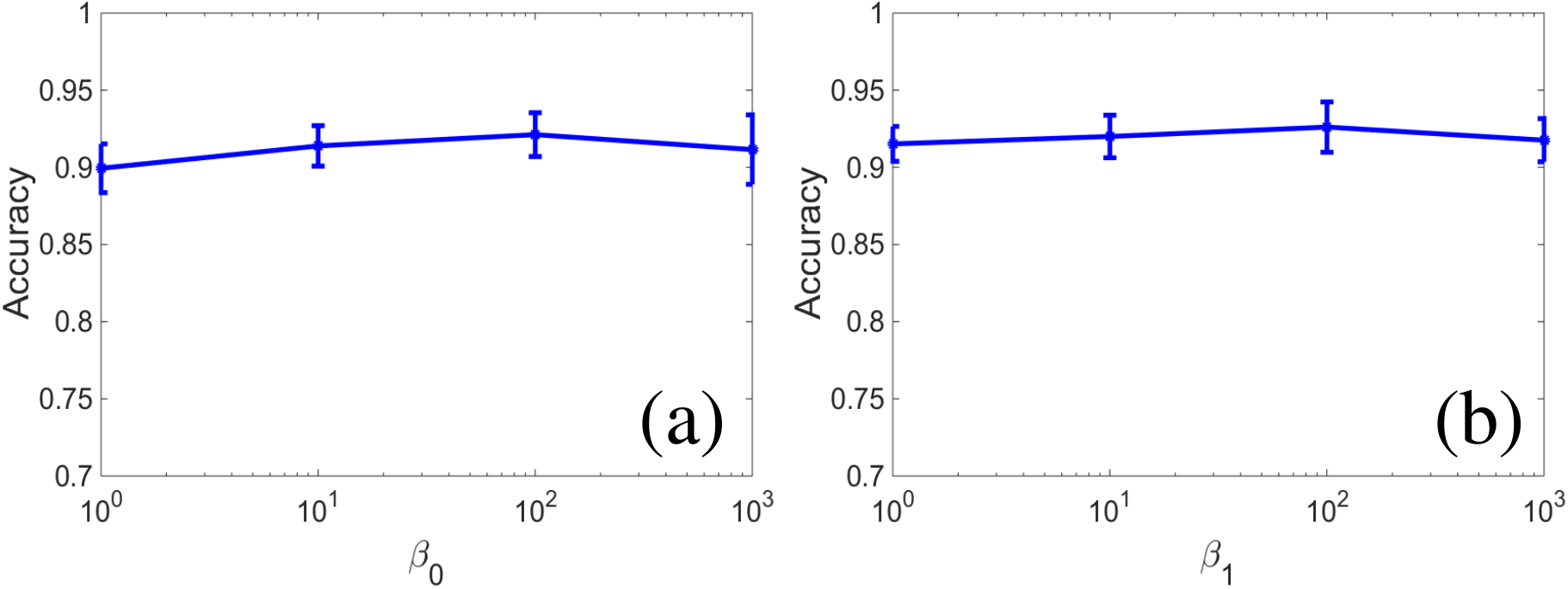}\\
  \caption{Parametric sensitivity of HyDEnT. (a) and (b) plot the classification accuracy w.r.t. the change of $\beta_0$ and $\beta_1$, respectively.}\label{fig:ParaSensitivity}
  \vspace{-12pt}
\end{figure}

\subsection{Single Teacher Vs. Multiple Teachers}
\label{sec:SingleVsMultipleTeacher}

\begin{figure*}[t]
  \centering
  \includegraphics[width=0.9\linewidth]{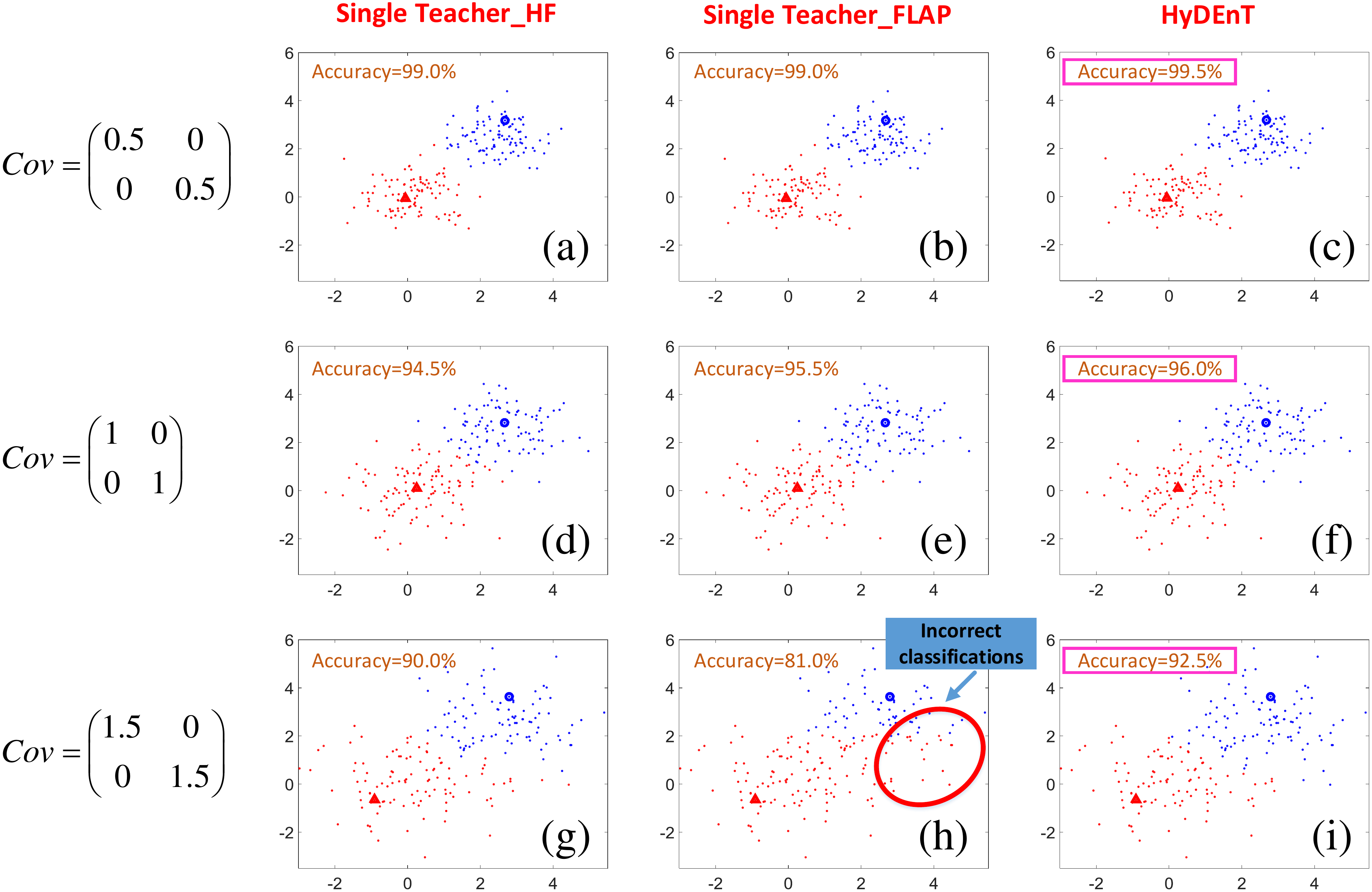}\\
  \caption{Comparison of single-teacher and multi-teacher settings on \emph{NoisyGaussian} dataset under different covariances. Red and blue dots denote positive
  and negative examples, respectively. Red triangle and blue circle represent labeled positive example and labeled negative example accordingly. The left, middle, and right columns
  present the results of ``Single Teacher\_HF'', ``Single Teacher\_FLAP'', and the proposed ``HyDEnT''. Different rows correspond to different levels of noise controlled by the increased covariance.}\label{fig:SingleTeacherVSMultiTeacher}
  \vspace{-7pt}
\end{figure*}

One feature of our HyDEnT method is that multiple teachers are introduced to form a teaching ensemble, which is better than simply employing single teacher. This makes HyDEnT stably yield satisfactory performance, especially in the presence of noisy dataset.\par
We synthesize a \emph{NoisyGaussian} dataset which is composed of two data clusters generated from two Gaussian distributions centered at $(0,0)$ and $(2.5,2.5)$, respectively. Each Gaussian forms a class, and only one example in each class is regarded as labeled (see Fig.~\ref{fig:SingleTeacherVSMultiTeacher}). In this experiment, we gradually add noise to the dataset by increasing the covariance of two Gaussians, and then evaluate the robustness of HyDEnT and different single teacher settings. Specifically, we employ Eq.~\eqref{eq:9} as teaching model to respectively guide the propagation process of HF and FLAP, and thus the two settings with single teacher are denoted by ``Single Teacher\_HF'' and ``Single Teacher\_FLAP'' accordingly.\par 

From the results shown in Fig.~\ref{fig:SingleTeacherVSMultiTeacher}, we observe that our ensemble teaching strategy (\emph{i.e.} HyDEnT) consistently generates
higher accuracy than the two single teacher settings with the noise level ranging from low to high. When the dataset is relatively clean, namely the covariance matrix $Cov=\left(\begin{smallmatrix}0.5 & 0 \\ 0 & 0.5\end{smallmatrix}\right)$, it can be found that both single teacher settings and
multi-teacher ensemble are able to achieve almost 100\% accuracy. When more noises are introduced by the covariance matrix $Cov=\left(\begin{smallmatrix}1 & 0 \\ 0 & 1\end{smallmatrix}\right)$,
we see that the accuracies of ``Single Teacher\_HF'', ``Single Teacher\_FLAP'' and ``HyDEnT'' slightly drop by 4.5\%, 3.5\% and 3.5\%, correspondingly.
However, if the dataset is contaminated by much heavy noise as indicated in the last row of Fig.~\ref{fig:SingleTeacherVSMultiTeacher}, we see that the accuracy of
``Single Teacher\_FLAP'' significantly descends to 81.0\%. Such unsatisfactory result happens because the
propagation process is misled by the outliers located between the two classes, therefore many negative examples are erroneously classified as positive as indicated
by the red circle in Fig.~\ref{fig:SingleTeacherVSMultiTeacher}(h). This reveals that only employing single teacher-learner pair is not safe for accurate propagation. In contrast, HyDEnT with ensemble teaching and
hybrid label propagation generates 92.5\% accuracy although the noisy data poses a great challenge for obtaining
good result. From this sense, we know that incorporating an ensemble of multiple teachers and learners helps to render
stable propagation results.

\subsection{Comparison with Other Methods}
\label{sec:Comparison}
\begin{figure}
  \centering
  \includegraphics[width=0.9\linewidth]{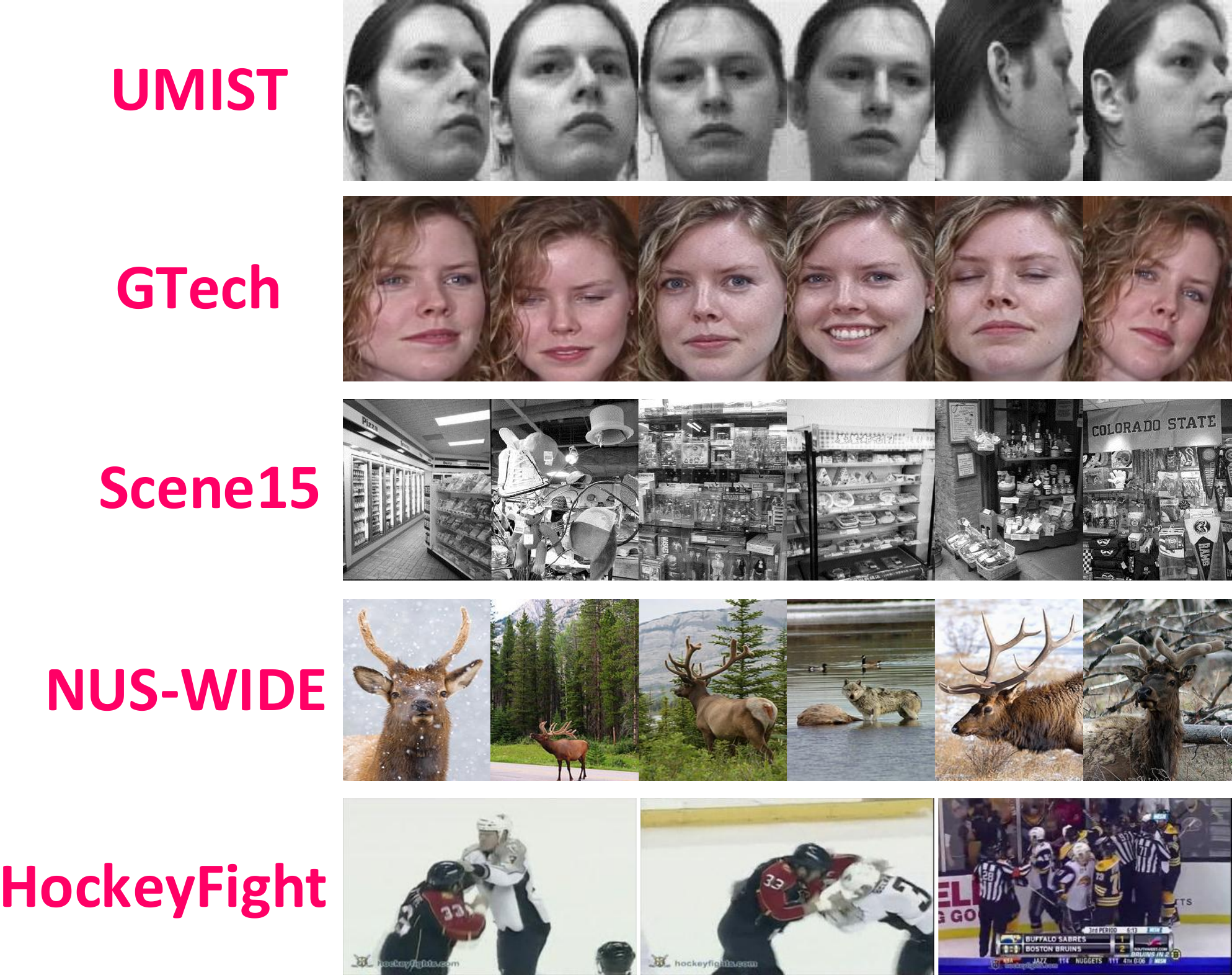}\\
  \caption{Some example images or frames in the adopted datasets including \emph{UMIST}, \emph{GTech}, \emph{Scene15}, \emph{NUS-WIDE}, and \emph{HockeyFight}. Each row displays the examples belonging to one class in the corresponding dataset.}\label{fig:ExampleImages}
  \vspace{-10pt}
\end{figure}

In this section, we compare the proposed HyDEnT algorithm with six representative propagation methods on six popular datasets including \emph{UMIST} \cite{Graham1995UMIST}, \emph{GTech}\footnote{\url{http://www.anefian.com/face reco.htm}}, \emph{Scene15} \cite{lazebnik2006beyond}, \emph{NUS-WIDE} \cite{chua2009nus}, \emph{HockeyFight}\footnote{\url{http://visilab.etsii.uclm.es/personas/oscar/FightDetection/index.html}}, and \emph{ISOLET} \cite{Fanty1990Spoken}. Among them, the \emph{UMIST} and \emph{GTech} datasets are on face recognition, \emph{Scene15} is on scene categorization, \emph{NUS-WIDE} \cite{chua2009nus} focuses on general image classification, \emph{HockeyFight} is for violent behaviour detection, and \emph{ISOLET} studies spoken letter recognition. The attributes of the adopted six datasets are summarized in Tab.~\ref{tab:Datasets}, from which we can see that the employed datasets cover a wide range of example quantity, category quantity, investigated tasks, and example types. Some example images/frames in the first five datasets are provided in Fig.~\ref{fig:ExampleImages}.\par

\begin{table*}[t]
\caption{An overview of the adopted datasets.}\label{tab:Datasets}
\begin{center}
\begin{tabular}{lcccc}
\toprule
             &  \# examples & \# classes & task & type \\
\midrule
\emph{UMIST}       & 575        & 20         & face recognition & image\\
\emph{GTech}       & 750        & 50         & face recognition & image\\
\emph{Scene15}     & 4485       & 15         & scene categorization & image\\
\emph{NUS-WIDE}    & 47254      & 112        & general image classification & image\\
\emph{HockeyFight} & 1000       & 2          & violent behaviour detection & video\\
\emph{ISOLET}      & 7800       & 26         & spoken letter recognition & audio\\
\bottomrule
\end{tabular}
\end{center}
\vskip -15pt
\end{table*}

The six baselines include classical propagation methods Harmonic Functions (HF) \cite{Zhu2002Report} and Linear Neighborhood Propagation (LNP) \cite{wang2009linear}, state-of-the-art method Fick's Law Assisted Propagation (FLAP) \cite{Gong2015Fick}, recent graph-based ensemble methodologies Semi-Supervised Ensemble Learning (SSEL) \cite{woo2012semi} and Bipartite Graph-based Consensus Maximization
(BGCM) \cite{gao2009graph}, and the most relevant Teaching-to-Learn and Learning-to-Teach (TLLT) \cite{Gong2016TLLT} that introduces single teacher for label propagation. Note that HF and FLAP are also the two learners in the implementation of our HyDEnT algorithm, so comparing them with HyDEnT helps to see the effect brought by the proposed teaching method. Besides, since BGCM is an ensemble algorithm combining the outputs from multiple models, we also use HF and FLAP as its two base models to achieve fair comparison.

\begin{figure*}
  \centering
  \includegraphics[width=0.9\linewidth]{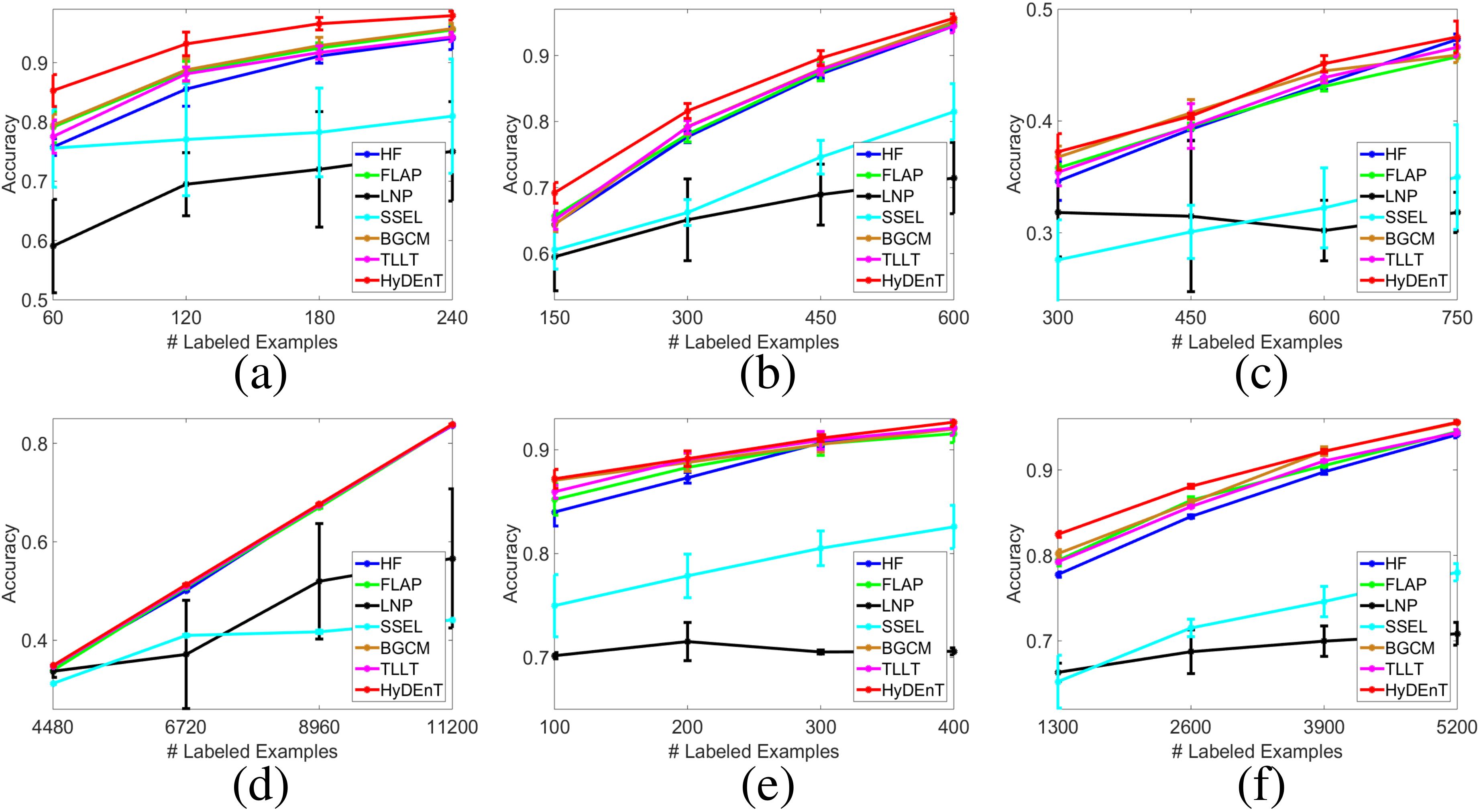}\\
  \caption{The classification accuracies of all the compared methods on six popular datasets regarding image, video, and audio. (a) is \emph{UMIST}, (b) is \emph{GTech}, (c) is \emph{Scene15}, (d) is \emph{NUS-WIDE}, (e) is \emph{HockeyFight}, and (d) is \emph{ISOLET}.}\label{fig:FaceComparison}
  \vspace{-12pt}
\end{figure*}

\subsubsection{\textbf{UMIST}}
The \emph{UMIST} face recognition dataset consists of totally 575 face images belonging to 20 individuals with different races, genders, and appearances. In our experiments, we use the cropped $112\times92$ face images\footnote{\url{http://www.cs.nyu.edu/~roweis/data.html}.} to compare the recognition accuracies of HF, FLAP, LNP, SSEL, BGCM, TLLT, and HyDEnT. Specifically, we randomly select 3, 6, 9, 12 images of each individual as labeled examples, and take the remaining examples as unlabeled ones.\par
Throughout this paper, we build the identical 5-NN graph for HF, FLAP, BGCM, TLLT and HyDEnT on each dataset, because it has been widely observed that a sparse graph usually leads to satisfactory performance \cite{Zhuang2015Constructing,Chen2013Large}. The number of neighbors in the graphs for LNP and SSEL is set to 10, as they operate on a different graph from HF, FLAP, BGCM, TLLT and HyDEnT. The trade-off parameter $\alpha$ in BGCM is tuned to 1 as suggested by \cite{gao2009graph}. The learning rate $\gamma$ for both TLLT and HyDEnT is tuned to 0.5. Similar to Section \ref{sec:VerificationOfOurMethod}, here we also use the pixel-wise gray value feature to characterize each image, and observe
the classification accuracies of all the compared methods with different sizes of labeled sets.\par

Every algorithm is independently implemented ten times, and the reported accuracies and standard deviations are the mean values of the outputs of these ten independent implementations. The performances of all the algorithms are presented in Fig.~\ref{fig:FaceComparison}(a), which suggests that our HyDEnT achieves the top level performance among all the compared methods. BGCM comes in the second place, and its accuracies are lower than HyDEnT with margins approximately 6\%, 5\%, 4\% and 2\% when $l=$ 60, 120, 180 and 240, respectively. FLAP and HF also perform worse than HyDEnT, so our proposed HyDEnT can improve the results of either FLAP or HF by properly combining their advantages in an ensemble teaching way. Furthermore, SSEL and LNP achieve significantly worse results than HyDEnT, and their standard deviations are also quite large. This indicates that SSEL and LNP are very sensitive to the choice of initial labeled examples. Comparatively, the error bars of HyDEnT indicate that its standard deviations are very small, reflecting that HyDEnT is stable and is able to obtain impressive performance regardless of the initial labeled examples.

\subsubsection{\textbf{GTech}}
\emph{GTech} face database contains the images of 50 people taken at the Center for Signal and Image Processing at Georgia Institute of Technology. Each people has 15 color images with cluttered background, and these 15 images cover frontal and/or tilted faces with different facial expressions, lighting conditions and scales. Here we use the cropped face images for our experiments, which are further resized to the resolution of $40\times30$.\par


The accuracies of all the methods are particularly investigated when $l$ varies from 150 to 600. The comparison results presented in Fig.~\ref{fig:FaceComparison}(b) indicate that our HyDEnT is in the first place, which is followed by TLLT and BGCM. Specifically, HyDEnT outperforms TLLT with a margin 4\%, 3\%, 2\%, 1\% when $l=$ 150, 300, 450, 600, respectively. Besides, it can be observed that HyDEnT is significantly better than each of its two learners HF and FLAP, which also demonstrates the strength of HyDEnT.

\subsubsection{\textbf{Scene15}}
\emph{Scene15} dataset contains the images of fifteen natural scene categories including bedroom, kitchen, street, store, and so on. Each category has 200$\sim$400 images, and average image size is $300 \times 250$ pixels. In our experiment, every image is represented by a 72-dimensional Pyramid Histogram Of Gradients (PHOG) feature vector \cite{bosch2007representing}, and our task is to identify which of the fifteen scene categories it belongs to.\par

We compare the performances of various algorithm when the number of originally labeled examples $l$ changes from 300 to 750. Fig.~\ref{fig:FaceComparison}(c) shows the results. Since scene recognition is a very challenging task, all the compared methods obtain relatively low classification accuracy on this dataset. Among the comparators, we can see that HF, FLAP and TLLT achieve comparable performances. In contrast, the proposed HyDEnT consistently beat all the baseline methods under different selections of $l$. Specifically, it can be noted that the accuracies generated by our HyDEnT are higher than other ensemble approaches like SSEL and BGCM, so HyDEnT can effectively exploit the advantage of each of the adopted base classifiers.

\subsubsection{\textbf{NUS-WIDE}}
The \emph{NUS-WIDE} is a web image dataset created by Lab for Media Search in National University of Singapore. In this dataset, the groudtruth label of every image example has been manually annotated, and thus this dataset is utilized here to evaluate the capability of an algorithm on image classification. For our experiment, we only reserve the classes that have more than 100 images, so a subset of \emph{NUS-WIDE} containing 47254 images with 112 classes is obtained.\par

Similar to the experiments on \emph{Scene15} dataset, here we also use the 72-dimensional PHOG feature vector to represent an image, and implement the compared methods including HF, FLAP, LNP, SSEL, BGCM, TLLT and HyDEnT for ten times with different choices of initially labeled examples. The experimental results are provided in Fig.~\ref{fig:FaceComparison}(d), from which we see that HF, FLAP, BGCM, TLLT and HyDEnT achieve very similar performance. 
A notable fact is that when the number of labeled examples is 11200, the classification accuracy of our HyDEnT on the remaining unlabeled images is as high as 83.87\%, and this is a very impressive performance since \emph{NUS-WIDE} is a very challenging dataset focusing on general image classification.

\subsubsection{\textbf{HockeyFight}}
In this section, we show that our method can also be applied to video analysis. Detecting the violent behaviors such as fighting and robbery is an important task in video surveillance, so the \emph{HockeyFight} dataset is employed here to test the various compared methods on recognizing the fighting behaviors during the hockey match. This dataset is made up of 1000 video clips collected in hockey competitions, among which 500 contain fighting behavior and 500 are non-fighting clips. Therefore, the task of this dataset is to correctly identify whether a video clip contains fighting behavior or not. As suggested by \cite{Xu2014Violent}, we utilize the Motion SIFT (MoSIFT) processed by Kernel Density Estimation (KDE) to form the action descriptors, and then use the Bag-of-Words (BoW) approach to characterize each video clip as a histogram over 100 visual words. As a result, every video clip in this dataset is represented by a 100-dimensional feature vector.\par
The experimental results are displayed in Fig.~\ref{fig:FaceComparison}(e). We can see that HF, FLAP, BGCM, TLLT and HyDEnT generally perform favorably to SSEL and LNP. The classification accuracies of HF are 84.00\%, 87.30\%, 90.65\% and 92.05\% when $l=$ 100, 200, 300 and 400, respectively. After imposing a single teacher on HF as suggested by TLLT, the accuracies can be enhanced to 85.95\%, 89.06\%, 90.90\% and 92.10\%. However, if multiple teachers are incorporated as the proposed HyDEnT, we see that the performance can be further improved to 87.20\%, 89.14\%, 91.12\% and 92.66\%. This reflects that the teaching committee and hybrid label propagation inherited by HyDEnT plays an important role in boosting the classification accuracy.

\subsubsection{\textbf{ISOLET}}
In this section, we study the ability of HF, FLAP, LNP, SSEL, BGCM, TLLT and HyDEnT on English spoken letter recognition. To this end, the \emph{ISOLET} database is adopted which consists of 7800 spoken letters ``A''$\sim$``Z'' produced by 150 male and female speakers.\par

The accuracies averaged over ten implementations obtained by HyDEnT and other baseline algorithms are displayed in Fig.~\ref{fig:FaceComparison}(f). We see that the performances of all the compared methods can be improved when the number of labeled examples gradually changes from small to large. The accuracy obtained by HF ranges from 77.76\% to 94.13\%, which is slightly worse than FLAP and TLLT with the accuracies being 79.40\%$\sim$94.49\% and 79.24\%$\sim$94.37\%, respectively. Although FLAP and TLLT have gained very encouraging results, the proposed HyDEnT are still able to improve their performances and its accuracies are 82.46\%$\sim$95.58\%. Specifically, we note that when the available labeled examples are very scarce (\emph{e.g.} $l=1300$), the advantage of our HyDEnT becomes more obvious than some very competitive existing methods such as FLAP, BGCM, and TLLT. Such superiority is due to the fact that HyDEnT arranges more suitable curriculums for propagation than other methods, and this is very important when we have to classify a large number of unlabeled examples given very few labeled examples. 

\subsubsection{\textbf{Significance Test}}
Above experiments have empirically shown that the proposed HyDEnT performs better than the compared baselines in most cases. In this section we use the paired t-test 
to statistically demonstrate such superiority of HyDEnT to other methods.\par

The paired t-test is a statistical tool to determine whether two sets of observations are essentially the same. In our experiments, all the compared methods are independently implemented ten times on different $l$ for the six datasets, so we may use paired t-test to examine whether the ten accuracies output by HyDEnT are significantly higher than those generated by the comparator. Table~\ref{Table t} lists the test results of HyDEnT versus every baseline algorithm, which suggests that HyDEnT is significantly better than other methods in most situations. 

\begin{table}[t]
\setlength\tabcolsep{3.5pt}
\caption{Paired t-test of the proposed HyDEnT to the compared algorithms (confidence level: 0.9). ``$\surd$'' indicates that HyDEnT is significantly better than the corresponding method, and ``-'' means that the performances of HyDEnT and the corresponding method are comparable.) \label{Table t}}
\small
\begin{center}
\begin{tabular}{|c|c|c|c|c|c|c|c|}
\hline
             & $l$ & HF & FLAP & LNP & SSEL & BGCM & TLLT \\
\hline\hline
\multirow{4}*{\textit{UMIST}}
             & 60  &  $\surd$  &  $\surd$    &  $\surd$      &  $\surd$ & $\surd$  & $\surd$  \\
             & 120 &  $\surd$  &  $\surd$    &  $\surd$      &  $\surd$ & $\surd$  & $\surd$  \\
             & 180 &  $\surd$  &  $\surd$    &  $\surd$      &  $\surd$ & $\surd$  & $\surd$  \\
             & 240 &  $\surd$  &  $\surd$    &  $\surd$      &  $\surd$ & $\surd$  & $\surd$ \\

\hline
\multirow{4}*{\textit{GTech}}
             & 150 & $\surd$   &  $\surd$    &  $\surd$    & $\surd$  & $\surd$  &  $\surd$ \\
             & 300 & $\surd$   &  $\surd$    &  $\surd$    & $\surd$  & $\surd$  &  $\surd$ \\
             & 450 & $\surd$   &  $\surd$    &  $\surd$    & $\surd$  & $\surd$  &  $\surd$ \\
             & 600 & $\surd$   &  $\surd$    &  $\surd$    & $\surd$  & -  &  $\surd$ \\

\hline
\multirow{4}*{\textit{Scene15}}
             & 300  &  $\surd$  &   $\surd$  &   $\surd$  & $\surd$  & -        & $\surd$  \\
             & 450  &  -        &   -        &   $\surd$  & $\surd$  & -        & $\surd$  \\
             & 600  &  $\surd$  &   $\surd$  &   $\surd$  & $\surd$  & $\surd$  & $\surd$  \\
             & 750  &  -        &   $\surd$  &   $\surd$  & $\surd$  & $\surd$  & $\surd$  \\

\hline
\multirow{4}*{\textit{NUS-WIDE}}
             & 4480  & -         & -       &  $\surd$  & $\surd$  & -  & -  \\
             & 6720  & $\surd$   & -       &  $\surd$  & $\surd$  & -  & -  \\
             & 8960  & -         & -       &  $\surd$  & $\surd$  & -  & -  \\
             & 11200 & -         & -       &  $\surd$  & $\surd$  & -  & -  \\

\hline
\multirow{4}*{\textit{HockeyFight}}
             & 100   &  $\surd$  & $\surd$ &  $\surd$      & $\surd$  & -       &  $\surd$ \\
             & 200   &  $\surd$  & $\surd$ &  $\surd$      & $\surd$  & -       &  - \\
             & 300   &  -        & -       &  $\surd$      & $\surd$  & $\surd$ &  - \\
             & 400   &  -        & $\surd$ &  $\surd$      & $\surd$  & -       &  - \\
\hline
\multirow{4}*{\textit{ISOLET}}
             & 1300   & $\surd$   & $\surd$    & $\surd$   & $\surd$  & $\surd$  & $\surd$  \\
             & 2600   & $\surd$   & $\surd$    & $\surd$   & $\surd$  & $\surd$  & $\surd$  \\
             & 3900   & $\surd$   & $\surd$    & $\surd$   & $\surd$  & -  & $\surd$  \\
             & 5200   & $\surd$   & $\surd$    & $\surd$   & $\surd$  & -  &  $\surd$ \\

\hline
\end{tabular}
\end{center}
\vskip -15pt
\end{table}

\subsection{Illustration of Convergence}
\label{sec:IllustrationOfConvergence}

\begin{figure}
  \centering
  \includegraphics[width=0.95\linewidth]{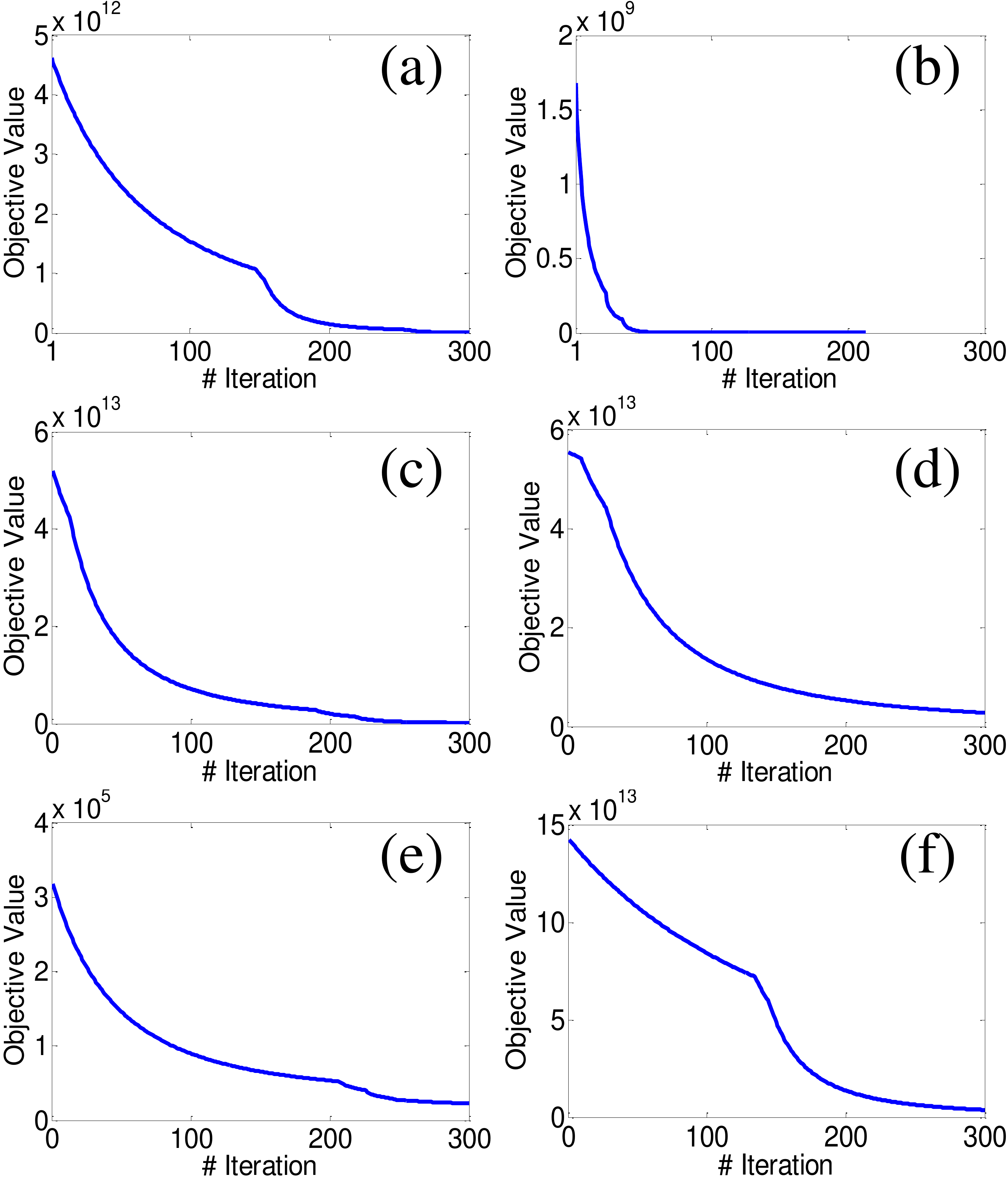}\\
  \caption{The convergence process of the BCD method adopted by our HyDEnT algorithm. (a) is \emph{UMIST} dataset, (b) is \emph{GTech}, (c) is \emph{Scene15} dataset, (d) is \emph{NUS-WIDE} dataset, (e) is \emph{HockeyFight}, and (f) is \emph{ISOLET} dataset.}\label{fig:ConvergenceCurve}
  \vspace{-6pt}
\end{figure}

In Section~\ref{sec:ProofOfConvergence}, we have theoretically proved that the designed iterative BCD optimization process monotonically decreases the objective function in Eq.~\eqref{eq:14} and finally converges to a stationary point. Here we plot the variation of objective values (\emph{i.e.} $Q(\bar{\mathbf{S}})$ in Eq.~\eqref{eq:14}) when the iteration proceeds on the above six datasets including \emph{UMIST}, \emph{GTech}, \emph{Scene15}, \emph{NUS-WIDE}, \emph{HockeyFight}, and \emph{ISOLET}. From the curves in Fig.~\ref{fig:ConvergenceCurve}, we see that the objective value decrease rapidly on all the six adopted datasets. This observation coincides with our previous theoretical findings and demonstrates that BCD is effective for solving the ensemble teaching model in Eq.~\eqref{eq:14}.

\section{Conclusion}
\label{sec:conclusion}
This paper proposed an ensemble teaching algorithm for hybrid label diffusion. The teaching algorithm is formulated as an optimization problem, which explicitly considers both the individuality of each teacher and the shared common knowledge among different teachers. Consequently, the incorporated teachers are able to cooperate with each other to pick up the overall simplest curriculum examples. Due to the efforts of the teaching committee, all the unlabeled examples are logically ``learned'' (\emph{i.e.} classified) by different learners (\emph{i.e.} propagation algorithms) via a simple-to-difficult order, and thus the propagation quality can be improved. The experimental results on several typical datasets reveal that the proposed approach obtains higher classification accuracy than existing state-of-the-art propagation methods, and this validates the rationality and effectiveness of our ensemble teaching strategy. 


%

%
%
%
%
%

\ifCLASSOPTIONcaptionsoff
  \newpage
\fi

{
\bibliographystyle{IEEEtran}
\bibliography{EnsembleTLLT}
}


%
%
%
%
%
%
%

\end{document}